%
%
\documentclass[twoside]{article}

\title{\sc Representing Independence Models with Elementary Triplets}
\author{{\small \bf Jose M. Pe\~{n}a}\\
\small IDA, Link\"oping University, Sweden\\ \small jose.m.pena@liu.se}
\date{}
\pagestyle{empty}

\usepackage{graphicx,amssymb,datetime,float,MnSymbol,currfile,amsthm}
\usepackage{tikz}
\usetikzlibrary{arrows}
\usetikzlibrary{decorations.markings}

\usepackage{framed}

\newtheorem{theorem}{Theorem}
\newtheorem{lemma}{Lemma}

\newtheorem{example}{Example}

\def\tci{\tilde{\perp}}
\def\tp{\tilde{p}}
\def\ci{\!\perp\!}
\def\nci{\!\not\perp\!}
\def\ra{\rightarrow}
\def\la{\leftarrow}

\newcommand{\comments}[1]{}

\newcommand{\om}[1]{\overline{\mathcal{#1}}}
\newcommand{\m}[1]{\mathcal{#1}}

\tikzset{tt/.style={decoration={
  markings,
  mark=at position .485 with {\arrow{>}},
  mark=at position .515 with {\arrow{<}}},postaction={decorate}}}

\begin{document}
\maketitle

\begin{abstract}
In an independence model, the triplets that represent conditional independences between singletons are called elementary. It is known that the elementary triplets represent the independence model unambiguously under some conditions. In this paper, we show how this representation helps performing some operations with independence models, such as finding the dominant triplets or a minimal independence map of an independence model, or computing the union or intersection of a pair of independence models, or performing causal reasoning. For the latter, we rephrase in terms of conditional independences some of Pearl's results for computing causal effects.
\end{abstract}

\section{Introduction}\label{sec:introduction}

In this paper, we explore a non-graphical approach to representing and reasoning with independence models. The approach consists in representing an independence model by its elementary triplets, i.e. the triplets that represent conditional independences between individual random variables. It is known that the elementary triplets represent the independence model unambiguously when the independence model satisfies the semi-graphoid properties \cite{Anetal.1992,Matus1992,Studeny2005}. Moreover, every elementary triplet corresponds to an elementary imset, i.e. a function over the power set of the set of random variables at hand \cite{Studeny2005}. This provides an interesting connection between the question addressed in this paper and imset theory. Specifically, structural imsets are an algebraic method to represent independence models that solve some of the drawbacks of graphical models. Interestingly, every structural imset can be expressed as a linear combination of elementary imsets. For a detailed account of imset theory, we refer the reader to \cite{Studeny2005}. See also \cite{BouckaertHLS10} for a study about efficient ways of solving the implication problem between two structural imsets, i.e. deciding whether the independence model represented by one of the imsets is included in the model represented by the other. This paper aims to show how to reason efficiently with independence models when these are represented by elementary triplets, instead of by structural imsets. Another set of distinguished triplets that has been used in the literature to represent and reason with independence models are dominant triplets, i.e. any triplet that cannot be derived from any other triplet \cite{BaiolettiBV09,Baioletti20112,BaiolettiPV13,deWaal:2004:SIC:1036843.1036857,LopatatzidisvanderGaag,studeny1998complexity}. We will later briefly compare the relative merits of elementary and dominant triplets. We will also show how to produce the dominant triplets from the elementary triplets.

The rest of the paper is organized as follows. In Section \ref{sec:preliminaries}, we introduce some notation and concepts. In Section \ref{sec:representation}, we study under which conditions an independence model can unambiguously be represented by its elementary triplets. In Section \ref{sec:operations}, we show how this representation helps performing some operations with independence models, such as finding the dominant triplets or a minimal independence map of an independence model, or computing the union or intersection of a pair of independence models, or performing causal reasoning. Finally, we close the paper with some discussion in Section \ref{sec:discussion}.

\section{Preliminaries}\label{sec:preliminaries}

In this section, we introduce some notation and concepts. Let $V$ denote a finite set of random variables. Subsets of $V$ are denoted by upper-case letters, whereas elements of $V$ are denoted by lower-case letters. We shall not distinguish between elements of $V$ and singletons. Given two sets $I, J \subseteq V$, we use $I J$ to denote $I \cup J$. Union has higher priority than set difference in expressions. 

Given three disjoint sets $I, J, K \subseteq V$, the triplet $I \ci J | K$ denotes that $I$ is conditionally independent of $J$ given $K$. Given a set of triplets ${\m{M}}$, also known as an independence model, $I \ci_{\m{M}} J | K$ denotes that $I \ci J | K$ is in ${\m{M}}$ whereas $I \nci_{\m{M}} J | K$ denotes that $I \ci_{\m{M}} J | K$ does not hold. A triplet $I \ci J | K$ is called elementary if $|I| = |J| = 1$. Moreover, a triplet $I \ci J | K$ dominates another triplet $I' \ci J' | K'$ if $I' \subseteq I$, $J' \subseteq J$ and $K \subseteq K' \subseteq (I \setminus I') (J \setminus J') K$. Given a set of triplets, a triplet in the set is called dominant if no other triplet in the set dominates it.

Given a probability distribution $p(V)$ and three disjoint sets $I, J, K \subseteq V$, the triplet $I \ci_p J | K$ denotes that $I$ is conditionally independent of $J$ given $K$ in $p(V)$, i.e.
\[
p(I | J K) = p(I | J) \text{ whenever } p(J K ) > 0.
\]
The set of all such triplets is called the independence model induced by $p(V)$. Moreover, if $I \ci_p J | K L$ does not hold but 
\[
p(I | J K, L=l) = p(I | K, L=l) \text{ whenever } p(J K, L=l)>0
\]
where $l$ is a value in the domain of $L$, then we say that $I$ is conditionally independent of $J$ given $K$ and the context $l$ in $p(V)$, and we denote it by $I \ci_p J | K, L=l$.

Consider the following properties between triplets:

\begin{itemize}

\item[(CI0)] $I \ci J | K \Leftrightarrow J \ci I | K$.

\item[(CI1)] $I \ci J | K L, I \ci K | L \Leftrightarrow I \ci J K | L$.

\item[(CI2)] $I \ci J | K L, I \ci K | J L \Rightarrow I \ci J | L, I \ci K | L$.

\item[(CI3)] $I \ci J | K L, I \ci K | J L \Leftarrow I \ci J | L, I \ci K | L$.

\end{itemize}

A set of triplets with the properties CI0-1/CI0-2/CI0-3 is also called a semigraphoid/graphoid/compositional graphoid. For instance, the independence model induced by a probability distribution is a semigraphoid, while the independence model induced by a strictly positive probability distribution is a graphoid, and the independence model induced by a regular Gaussian distribution is a compositional graphoid. The CI0 property is also called symmetry property. The $\Rightarrow$ part of the CI1 property is also called contraction property, and the $\Leftarrow$ part corresponds to the so-called weak union and decomposition properties. The CI2 and CI3 properties are also called intersection and composition properties. Intersection is typically defined as $I \ci J | K L, I \ci K | J L \Rightarrow I \ci J K | L$. Note however that this and our definition are equivalent if CI1 holds. First, $I \ci J K | L$ implies $I \ci J | L$ and $I \ci K | L$ by CI1. Second, $I \ci J | L$ together with $I \ci K | J L$ imply $I \ci J K | L$ by CI1. Likewise, composition is typically defined as $I \ci J K | L \Leftarrow I \ci J | L, I \ci K | L$. Again, this and our definition are equivalent if CI1 holds. First, $I \ci J K | L$ implies $I \ci J | K L$ and $I \ci K | J L$ by CI1. Second, $I \ci K | J L$ together with $I \ci J | L$ imply $I \ci J K | L$ by CI1. In this paper, we will study sets of triplets that satisfy CI0-1, CI0-2 or CI0-3. So, the standard and our definitions are equivalent.

Consider also the following properties between elementary triplets:

\begin{itemize}

\item[(ci0)] $i \ci j | K \Leftrightarrow j \ci i | K$.

\item[(ci1)] $i \ci j | k L, i \ci k | L \Leftrightarrow i \ci k | j L, i \ci j | L$.

\item[(ci2)] $i \ci j | k L, i \ci k | j L \Rightarrow i \ci j | L, i \ci k | L$.

\item[(ci3)] $i \ci j | k L, i \ci k | j L \Leftarrow i \ci j | L, i \ci k | L$.

\end{itemize}
Note that CI2 and CI3 only differ in the direction of the implication. The same holds for ci2 and ci3. Note that ci0-3 are the elementary versions of CI0-3 with the only exception of ci1 and CI1.

Given a set of triplets $\m{M} = \{ I \ci J | K\}$, let
\[
\om{E} = e(\m{M}) = \{ i \ci j | M : I \ci_{\m{M}} J | K \text{ with } i \in I, j \in J \text{ and } K \subseteq M \subseteq (I \setminus i) (J \setminus j) K \}.
\]
Similarly, given a set of elementary triplets $\m{E} = \{ i \ci j | K\}$, let
\[
\om{M} = m(\m{E}) = \{ I \ci J | K : i \ci_{\m{E}} j | M \text{ for all } i \in I, j \in J \text{ and } K \subseteq M \subseteq (I \setminus i) (J \setminus j) K \}.
\]

We say that a set of triplets is closed under CI0-1/CI0-2/CI0-3 if applying the properties CI0-1/CI0-2/CI0-3 to triplets in the set always returns triplets that are in the set. Given a set of triplets $\m{M}$, we define its closure under CI0-1/CI0-2/CI0-3, denoted as $\m{M}^*$, as the minimal superset of $\m{M}$ that is closed under CI0-1/CI0-2/CI0-3. We define similarly the closure of a set of elementary triplets $\m{E}$ under ci0-1/ci0-2/ci0-3, which we denote as $\m{E}^*$.

Graphs can be used to represent independence models as follows. A directed and acyclic graph (DAG) is a graph that only has directed edges and does not have any subgraph of the form $i_1 \ra \ldots \ra i_n \ra i_1$. Given a DAG $G$ over $V$, a path between a node $i_1$ and a node $i_{n}$ on $G$ is a sequence of distinct nodes $i_{1}, \ldots, i_{n}$ such that $G$ has an edge between every pair of consecutive nodes in the sequence. If every edge in the path is of the form $i_j \ra i_{j+1}$, then $i_1$ is called an ancestor of $i_n$. Let $An_G(K)$ with $K \subseteq V$ denote the union of the ancestors of each node in $K$. A node $k$ on a path in $G$ is said to be a collider on the path if $i \ra k \la j$ is a subpath. Moreover, the path is said to be connecting given $K$ when
\begin{itemize}
\item every collider on the path is in $K \cup An_G(K)$, and

\item every non-collider on the path is outside $K$.
\end{itemize}
Let $I$, $J$ and $K$ denote three disjoint subsets of $V$. When there is no path in $G$ connecting a node in $I$ and a node in $J$ given $K$, we say that $I$ and $J$ are {\em d}-separated given $K$ in $G$, denoted as $I \ci_G J | K$. The independence model induced by $G$ consists of the triplets $I \ci J | K$ such that $I \ci_G J | K$.

We say that a DAG $G$ over $V$ is a minimal independence map of a set of triplets $\m{M}$ relative to an ordering $\sigma$ of the elements in $V$ if (i) $I \ci_G J | K$ implies that $I \ci_{\m{M}} J | K$, (ii) removing any edge from $G$ makes it cease to satisfy condition (i), and (iii) the edges of $G$ are of the form $\sigma(s) \ra \sigma(t)$ with $s < t$. Moreover, if $\m{M}$ is the independence model induced by a probability distribution $p(V)$, then the following factorization holds: 
\[
p(V) = \prod_{s=1}^{|V|} p(\sigma(s) | Pa_G(\sigma(s)))
\]
where $Pa_G(j) = \{i | i \ra j \text{ is in } G \}$ are the parents of $j$ in $G$. Moreover, $G$ is a perfect map of $\m{M}$ if $I \ci_G J | K$ implies $I \ci_{\m{M}} J | K$ and vice versa.

Finally, given three disjoint sets $X, Y, W \subseteq V$, we define the causal effect on $Y$ given $W$ of an intervention on $X$ as the conditional probability distribution of $Y$ given $W$ after setting $X$ to some value in its domain through an intervention, as opposed to an observation. We say that the causal effect is identifiable if it can be computed from observed quantities, i.e. from the probability distribution over $V$.

\section{Representation}\label{sec:representation}

In this section, we study the use of elementary triplets to represent independence models. We start by proving in the following lemma that there is a bijection between certain sets of triplets and certain sets of elementary triplets. The lemma has previously been proven when the sets of triplets and elementary triplets satisfy CI0-1 and ci0-1 \cite[Proposition 1]{Matus1992}. We extend it to the cases where they satisfy CI0-2/CI0-3 and ci0-2/ci0-3.

\begin{lemma}\label{lem:G2P}
If a set of triplets $\m{M}$ satisfies CI0-1/CI0-2/CI0-3 then $\om{E}$ satisfies ci0-1/ci0-2/ci0-3, $\m{M} = m(\om{E})$, and $\om{E} = \{ i \ci j | K : i \ci_{\m{M}} j | K \}$. Similarly, if a set of elementary triplets $\m{E}$ satisfies ci0-1/ci0-2/ci0-3 then $\om{M}$ satisfies CI0-1/CI0-2/CI0-3, $\m{E} = e(\om{M})$, and $\m{E} = \{ i \ci j | K : i \ci_{\om{M}} j | K \}$.
\end{lemma}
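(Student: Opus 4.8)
The plan is to exploit the fact that the statement splits into two symmetric halves and three property levels, and that CI0-2, CI0-3 both contain CI0-1 (and ci0-2, ci0-3 both contain ci0-1). Consequently, every assertion in the lemma that does not mention the extra properties CI2/CI3 (resp.\ ci2/ci3)---namely $\m{M} = m(\om{E})$, $\om{E} = \{ i \ci j | K : i \ci_{\m{M}} j | K \}$, $\m{E} = e(\om{M})$, $\m{E} = \{ i \ci j | K : i \ci_{\om{M}} j | K \}$, together with the ci0-1/CI0-1 part of the closure claims---follows directly from the already-established CI0-1/ci0-1 case \cite[Proposition 1]{Matus1992}. Thus the only genuinely new content is that the additional properties transfer across the maps $e$ and $m$: (a) if $\m{M}$ satisfies CI2 then $\om{E}$ satisfies ci2; (b) if $\m{M}$ satisfies CI3 then $\om{E}$ satisfies ci3; (c) if $\m{E}$ satisfies ci2 then $\om{M}$ satisfies CI2; and (d) if $\m{E}$ satisfies ci3 then $\om{M}$ satisfies CI3. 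I would dispatch (a) and (b) immediately and devote the real work to (c) and (d).

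For the $e$-direction I would use the identity $\om{E} = \{ i \ci j | K : i \ci_{\m{M}} j | K \}$ supplied by the CI0-1 case: the members of $\om{E}$ are exactly the elementary triplets already present in $\m{M}$. Now observe that ci2 is literally CI2 with $I,J,K$ specialized to the singletons $i,j,k$, and that all four triplets occurring in ci2 are elementary; likewise ci3 is CI3 so specialized. Hence, if $\m{M}$ is closed under CI2 (resp.\ CI3), applying that closure to the elementary premises lying in $\om{E}$ produces elementary conclusions, which therefore also lie in $\om{E}$, giving (a) and (b). It is worth noting that this shortcut is unavailable for ci1 versus CI1, since the latter produces the non-elementary triplet $i \ci jk | L$; that mismatch is precisely the subtlety absorbed by the cited CI0-1 result.

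The $m$-direction is the crux, and I would prove it by induction on $|B|+|C|$, working with the \emph{standard} intersection and composition (with independent sets $B,C$), which are equivalent to CI2 and CI3 given CI1---a property $\om{M}$ already satisfies. For (c) the goal is $A \ci_{\om{M}} B | CD$, $A \ci_{\om{M}} C | BD \Rightarrow A \ci_{\om{M}} B | D$, $A \ci_{\om{M}} C | D$. In the base case $|B|=|C|=1$, say $B=\{j\}$ and $C=\{c\}$, I unfold membership in $\om{M}$: for each $i \in A$ and each admissible conditioning set $PD$ with $P \subseteq A \setminus i$, the two hypotheses supply $i \ci_{\m{E}} j | cPD$ and $i \ci_{\m{E}} c | jPD$, and a single application of ci2 (conditioning on $PD$) yields $i \ci_{\m{E}} j | PD$, which is exactly what $A \ci_{\om{M}} B | D$ demands; the arbitrary set $A$ causes no trouble because membership in $\om{M}$ is by definition a conjunction of elementary statements indexed by $i \in A$. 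For the inductive step with, say, $|C| \ge 2$, I pick $c \in C$, set $C' = C \setminus c$, use weak union and decomposition (both consequences of CI1) to extract $A \ci_{\om{M}} c | BC'D$, $A \ci_{\om{M}} C' | BD$ and $A \ci_{\om{M}} c | C'D$, apply the induction hypothesis twice (once to the pair $B, \{c\}$ over baseline $C'D$, once to $B, C'$), and recombine with contraction to recover $A \ci_{\om{M}} C | D$. Claim (d) is symmetric: composition $A \ci_{\om{M}} B | D$, $A \ci_{\om{M}} C | D \Rightarrow A \ci_{\om{M}} BC | D$, with the base case discharged by ci3 (which instead \emph{adds} a conditioning variable) and the inductive step again driven by CI1. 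The main obstacle is getting this induction right: one must route every reduction through the CI1-derived weak union, decomposition and contraction at the set level, invoke the inductive hypothesis only on strictly smaller instances, and check in the base case that the conditioning sets delivered by the two hypotheses align exactly as ci2/ci3 requires.
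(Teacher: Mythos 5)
Your proposal is correct and takes essentially the same approach as the paper: it defers everything but the four transfer claims to the cited CI0-1/ci0-1 result of Mat\'u\v{s}, handles the $e$-direction by observing that ci2/ci3 are exactly CI2/CI3 restricted to elementary triplets (the paper does the same, unfolding the definition of $e(\m{M})$ plus CI1 instead of quoting the identity $\om{E} = \{ i \ci j | K : i \ci_{\m{M}} j | K \}$), and proves the $m$-direction by induction on the size of the two right-hand sets, with a base case that unfolds the definition of $m(\m{E})$ and applies ci2/ci3 pointwise over the admissible conditioning sets, and an inductive step that splits one set and recombines via CI1 and the induction hypothesis --- precisely the paper's structure. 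Your only deviation, phrasing the induction in terms of the standard intersection/composition properties rather than the paper's CI2/CI3 forms, is immaterial, since the paper itself establishes their equivalence under CI1, which $m(\m{E})$ satisfies by the cited CI0-1 case.
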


\begin{proof}
The lemma has previously been proven when $\m{M}$ and $\m{E}$ satisfy CI0-1 and ci0-1 \cite[Proposition 1]{Matus1992}. Therefore, we only have to prove that if $\m{M}$ satisfies CI0-3 then $\om{E}$ satisfies ci2-3, and that if $\m{E}$ satisfies ci0-3 then $\om{M}$ satisfies CI2-3.

\underline{Proof of CI0-2 $\Rightarrow$ ci2}

Assume that $i \ci_{\om{E}} j | k L$ and $i \ci_{\om{E}} k | j L$. Then, it follows from the definition of $\om{E}$ that $i \ci_{\m{M}} j | k L$ or $I \ci_{\m{M}} J | M$ with $i \in I$, $j \in J$ and $M \subseteq k L \subseteq (I \setminus i) (J \setminus j) M$. Note that the latter case implies that $i \ci_{\m{M}} j | k L$ by CI1. Similarly, $i \ci_{\om{E}} k | j L$ implies $i \ci_{\m{M}} k | j L$. Then, $i \ci_{\m{M}} j | L$ and $i \ci_{\m{M}} k | L$ by CI2. Then, $i \ci_{\om{E}} j | L$ and $i \ci_{\om{E}} k | L$ by definition of ${\om{E}}$.

\underline{Proof of CI0-3 $\Rightarrow$ ci3}

Assume that $i \ci_{\om{E}} j | L$ and $i \ci_{\om{E}} k | L$. Then, $i \ci_{\m{M}} j | L$ and $i \ci_{\m{M}} k | L$ by the same reasoning as before, which imply $i \ci_{\m{M}} j | k L$ and $i \ci_{\m{M}} k | j L$ by CI3. Then, $i \ci_{\om{E}} j | k L$ and $i \ci_{\om{E}} k | j L$ by definition of ${\om{E}}$.

\underline{Proof of ci0-2 $\Rightarrow$ CI2}

\begin{enumerate}

\item Assume that $I \ci_{\m{M}} j | k L$ and $I \ci_{\m{M}} k | j L$.\label{eq:2.14}

\item $i \ci_{\m{E}} j | k M$ and $i \ci_{\m{E}} k | j M$ for all $i \in I$ and $L \subseteq M \subseteq (I \setminus i) L$ follows from (\ref{eq:2.14}) by definition of ${\m{M}}$.\label{eq:2.16}

\item $i \ci_{\m{E}} j | M$ and $i \ci_{\m{E}} k | M$ for all $i \in I$ and $L \subseteq M \subseteq (I \setminus i) L$ by ci2 on (\ref{eq:2.16}).\label{eq:2.17}

\item $I \ci_{\m{M}} j | L$ and $I \ci_{\m{M}} k | L$ follows from (\ref{eq:2.17}) by definition of ${\m{M}}$.

\end{enumerate}

Therefore, we have proven the result when $|J|=|K|=1$. Assume as induction hypothesis that the result also holds when $2 < |J K| < s$. Assume without loss of generality that $1 < |J|$. Let $J = J_1 J_2$ such that $J_1, J_2 \neq \emptyset$ and $J_1 \cap J_2 = \emptyset$. 

\begin{enumerate}\setcounter{enumi}{4}

\item $I \ci_{\m{M}} J_1 | J_2 K L$ and $I \ci_{\m{M}} J_2 | J_1 K L$ by CI1 on $I \ci_{\m{M}} J | K L$.\label{eq:2.19}

\item $I \ci_{\m{M}} J_1 | J_2 L$ and $I \ci_{\m{M}} J_2 | J_1 L$ by the induction hypothesis on (\ref{eq:2.19}) and $I \ci_{\m{M}} K | J L$.\label{eq:2.20}

\item $I \ci_{\m{M}} J_1 | L$ by the induction hypothesis on (\ref{eq:2.20}).\label{eq:2.23}

\item $I \ci_{\m{M}} J | L$ by CI1 on (\ref{eq:2.20}) and (\ref{eq:2.23}).\label{eq:2.23b}

\item $I \ci_{\m{M}} K | L$ by CI1 on (\ref{eq:2.23b}) and $I \ci_{\m{M}} K | J L$.

\end{enumerate}

\underline{Proof of ci0-3 $\Rightarrow$ CI3}

\begin{enumerate}\setcounter{enumi}{9}

\item Assume that $I \ci_{\m{M}} j | L$ and $I \ci_{\m{M}} k | L$.\label{eq:2.28}

\item $i \ci_{\m{E}} j | M$ and $i \ci_{\m{E}} k | M$ for all $i \in I$ and $L \subseteq M \subseteq (I \setminus i) L$ follows from (\ref{eq:2.28}) by definition of ${\m{M}}$.\label{eq:2.30}

\item $i \ci_{\m{E}} j | k M$ and $i \ci_{\m{E}} k | j M$ for all $i \in I$ and $L \subseteq M \subseteq (I \setminus i) L$ by ci3 on (\ref{eq:2.30}).\label{eq:2.31}

\item $I \ci_{\m{M}} j | k L$ and $I \ci_{\m{M}} k |j L$ follows from (\ref{eq:2.31}) by definition of ${\m{M}}$.

\end{enumerate}

Therefore, we have proven the result when $|J|=|K|=1$. Assume as induction hypothesis that the result also holds when $2 < |J K| < s$. Assume without loss of generality that $1 < |J|$. Let $J = J_1 J_2$ such that $J_1, J_2 \neq \emptyset$ and $J_1 \cap J_2 = \emptyset$. 

\begin{enumerate}\setcounter{enumi}{13}

\item $I \ci_{\m{M}} J_1 | L$ by CI1 on $I \ci_{\m{M}} J | L$.\label{eq:2.33}

\item $I \ci_{\m{M}} J_2 | J_ 1 L$ by CI1 on $I \ci_{\m{M}} J | L$.\label{eq:2.34}

\item $I \ci_{\m{M}} K | J_1 L$ by the induction hypothesis on (\ref{eq:2.33}) and $I \ci_{\m{M}} K | L$.\label{eq:2.35}

\item $I \ci_{\m{M}} K | J L$ by the induction hypothesis on (\ref{eq:2.34}) and (\ref{eq:2.35}).\label{eq:2.36}

\item $I \ci_{\m{M}} J K | L$ by CI1 on (\ref{eq:2.36}) and $I \ci_{\m{M}} J | L$.\label{eq:2.37}

\item $I \ci_{\m{M}} J | K L$ and $I \ci_{\m{M}} K | J L$ by CI1 on (\ref{eq:2.37}).

\end{enumerate}
\end{proof}

The following lemma generalizes Lemma \ref{lem:G2P} by removing the assumptions about $\m{M}$ and $\m{E}$.

\begin{lemma}\label{lem:G2Pb}
Let $\m{M}$ denote a set of triplets. Then, $\om{E}^* = e(\m{M}^*)$, $\m{M}^* = m(\om{E}^*)$ and $\om{E}^* = \{ i \ci j | K : i \ci_{\m{M}^*} j | K \}$. Let $\m{E}$ denote a set of elementary triplets. Then, $\om{M}^* = m(\m{E}^*)$, $\m{E}^* = e(\om{M}^*)$ and $\m{E}^* = \{ i \ci j | K : i \ci_{\om{M}^*} j | K \}$.
\end{lemma}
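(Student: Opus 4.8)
The plan is to reduce the whole statement to the single identity $\om{E}^* = e(\m{M}^*)$ (and its mirror image for $\m{E}$), after which the remaining equalities can be read off directly from Lemma~\ref{lem:G2P}. First I would record two elementary facts about the operators $e$ and $m$. Both are monotone with respect to set inclusion, which is immediate from their definitions: enlarging the input can only enlarge the output, since the existential condition defining $e$ and the universal condition defining $m$ are each preserved under adding triplets. I would also note that $\m{M}^*$ satisfies CI0-1/CI0-2/CI0-3 by construction, so Lemma~\ref{lem:G2P} applies to it and yields that $e(\m{M}^*)$ is closed under ci0-1/ci0-2/ci0-3, together with $\m{M}^* = m(e(\m{M}^*))$ and $e(\m{M}^*) = \{ i \ci j | K : i \ci_{\m{M}^*} j | K \}$; dually, the lemma applies to the closed elementary set $\om{E}^*$, giving that $m(\om{E}^*)$ is closed under CI0-1/CI0-2/CI0-3 and $e(m(\om{E}^*)) = \om{E}^*$.

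For the inclusion $\om{E}^* \subseteq e(\m{M}^*)$, since $\m{M} \subseteq \m{M}^*$ monotonicity of $e$ gives $\om{E} = e(\m{M}) \subseteq e(\m{M}^*)$; as $e(\m{M}^*)$ is closed under ci0-1/ci0-2/ci0-3 and $\om{E}^*$ is by definition the smallest closed superset of $\om{E}$, the inclusion follows at once.

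The reverse inclusion $e(\m{M}^*) \subseteq \om{E}^*$ is the crux, and here I would pass through $m(\om{E}^*)$. The key observation is that $\m{M} \subseteq m(\om{E}^*)$: for any $I \ci J | K$ in $\m{M}$, the definition of $e$ places every elementary triplet $i \ci j | M$ with $i \in I$, $j \in J$ and $K \subseteq M \subseteq (I \setminus i)(J \setminus j) K$ into $\om{E} \subseteq \om{E}^*$, and this is exactly the family of elementary triplets the definition of $m$ requires in order to certify $I \ci J | K \in m(\om{E}^*)$. Thus $m(\om{E}^*)$ is a closed superset of $\m{M}$, so minimality of $\m{M}^*$ gives $\m{M}^* \subseteq m(\om{E}^*)$; applying the monotone $e$ then yields $e(\m{M}^*) \subseteq e(m(\om{E}^*)) = \om{E}^*$. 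Combining the two inclusions gives $\om{E}^* = e(\m{M}^*)$, whence $\m{M}^* = m(e(\m{M}^*)) = m(\om{E}^*)$ and $\om{E}^* = \{ i \ci j | K : i \ci_{\m{M}^*} j | K \}$ by the identities recorded in the first paragraph.

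I expect the main obstacle to be precisely this reverse inclusion, since it is the only place where minimality of the closure is used essentially, and where one must verify from the raw definitions of $e$ and $m$ that the round trip $\m{M} \mapsto \om{E}^* \mapsto m(\om{E}^*)$ recovers at least $\m{M}$. The second half of the lemma, starting from a set of elementary triplets $\m{E}$, then follows by the same argument with the roles of triplets and elementary triplets, of $e$ and $m$, and of CI0-1/CI0-2/CI0-3 and ci0-1/ci0-2/ci0-3 interchanged, invoking the complementary direction of Lemma~\ref{lem:G2P}; no new idea is needed.
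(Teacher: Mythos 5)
Your proposal is correct and follows essentially the same route as the paper's proof: both rest on the inclusions $\m{M} \subseteq m(\om{E}^*)$ and $\om{E} \subseteq e(\m{M}^*)$, the closedness of $m(\om{E}^*)$ and $e(\m{M}^*)$ supplied by Lemma~\ref{lem:G2P}, minimality of the closures, monotonicity of $e$ and $m$, and the round-trip identities $\m{M}^* = m(e(\m{M}^*))$ and $\om{E}^* = e(m(\om{E}^*))$. The only cosmetic difference is that you establish $\om{E}^* = e(\m{M}^*)$ first and then obtain $\m{M}^* = m(\om{E}^*)$ by substitution, whereas the paper runs the two sandwich chains in parallel; the ingredients are identical.
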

 
\begin{proof}

Clearly, $\m{M} \subseteq m(\om{E}^*)$ and, thus, $\m{M}^* \subseteq m(\om{E}^*)$ because $m(\om{E}^*)$ satisfies CI0-1/CI0-2/CI0-3 by Lemma \ref{lem:G2P}. Clearly, $\om{E} \subseteq e(\m{M}^*)$ and, thus, $\om{E}^* \subseteq e(\m{M}^*)$ because $e(\m{M}^*)$ satisfies ci0-1/ci0-2/ci0-3 by Lemma \ref{lem:G2P}. Then, $\m{M}^* \subseteq m(\om{E}^*) \subseteq m(e(\m{M}^*))$ and $\om{E}^* \subseteq e(\m{M}^*) \subseteq e(m(\om{E}^*))$. Then, $\m{M}^* = m(\om{E}^*)$ and $\om{E}^* = e(\m{M}^*)$, because $\m{M}^* = m(e(\m{M}^*))$ and $\om{E}^* = e(m(\om{E}^*))$ by Lemma \ref{lem:G2P}. Finally, that $\om{E}^* = \{ i \ci j | K : i \ci_{\m{M}^*} j | K \}$ is now trivial.

Similarly, $\m{E} \subseteq e(\om{M}^*)$ and, thus, $\m{E}^* \subseteq e(\om{M}^*)$ because $e(\om{M}^*)$ satisfies ci0-1/ci0-2/ci0-3 by Lemma \ref{lem:G2P}. Clearly, $\om{M} \subseteq m(\m{E}^*)$ and, thus, $\om{M}^* \subseteq m(\m{E}^*)$ because $m(\m{E}^*)$ satisfies CI0-1/CI0-2/CI0-3 by Lemma \ref{lem:G2P}. Then, $\m{E}^* \subseteq e(\om{M}^*) \subseteq e(m(\m{E}^*))$ and $\om{M}^* \subseteq m(\m{E}^*) \subseteq m(e(\om{M}^*))$. Then, $\m{E}^* = e(\om{M}^*)$ and $\om{M}^* = m(\m{E}^*)$, because $\m{E}^* = e(m(\m{E}^*))$ and $\om{M}^* = m(e(\om{M}^*))$ by Lemma \ref{lem:G2P}. Finally, that $\m{E}^* = \{ i \ci j | K : i \ci_{\om{M}^*} j | K \}$ is now trivial.
\end{proof}

Lemma \ref{lem:G2P} implies that every set of triplets $\m{M}$ satisfying CI0-1/CI0-2/CI0-3 can be paired to a set of elementary triplets $\om{E}$ satisfying ci0-1/ci0-2/ci0-3, and vice versa. The lemma implies that the pairing is actually a bijection. Thanks to this bijection, we can use $\om{E}$ to represent $\m{M}$. This is in general a much more economical representation: If $|V| = n$, then there are up to $4^n$ triplets,\footnote{A triplet can be represented as a $n$-tuple whose entries state if the corresponding element is in the first, second, third or none set of the triplet.} whereas there are $n^2 \cdot 2^{n-2}$ elementary triplets at most.

Likewise, Lemma \ref{lem:G2Pb} implies that there is a bijection between the CI0-1/CI0-2/CI0-3 closures of sets of triplets and the ci0-1/ci0-2/ci0-3 closures of sets of elementary triplets. Thanks to this bijection, we can use $\om{E}^*$ to represent $\m{M}^*$. Note that $\om{E}^*$ is obtained by ci0-1/ci0-2/ci0-3 closing $\om{E}$, which is obtained from $\m{M}$. So, there is no need to CI0-1/CI0-2/CI0-3 close $\m{M}$ and so produce $\m{M}^*$. Whether closing $\om{E}$ can be done faster than closing $\m{M}$ on average is an open question. In the worst-case scenario, both imply applying the corresponding properties a number of times exponential in $|V|$ \cite{Matus02}. The following examples illustrate the savings in space that can be achieved by using $\om{E}^*$ to represent $\m{M}^*$.

\begin{example}\label{ex:one}
This example is taken from \cite{LopatatzidisvanderGaag}. Let $V=\{1,2,3,4,5,6\}$. Let $\m{M}=\{ 5 \ci 6 | \emptyset, 12 \ci 34 | 6, 23 \ci 14 | 5, 12 \ci 34 | 5, 3 \ci 14 | 25 \}$. The CI0-1, CI0-2 and CI0-3 closures of $\m{M}$ have the same 162 triplets. However, they can be represented in a more concise manner by their 82 elementary triplets.
\end{example}

\begin{example}\label{ex:two}
This example will be used again later in this work. Let $V=\{1,2,3,4,5,6\}$. Let $\m{M}=\{ 12 \ci456 | \emptyset, 123 \ci 4 | \emptyset \}$. The CI0-1, CI0-2 and CI0-3 closures of $\m{M}$ have the same 218 triplets. However, they can be represented in a more concise manner by their 112 elementary triplets.
\end{example}

One may think that Lemmas \ref{lem:G2P} and \ref{lem:G2Pb} have theoretical interest but little practical interest, because one may have access to a set of triplets $\m{M}$ that is not closed under CI0-1/CI0-2/CI0-3 and, thus, $\om{E}^*$ can only be obtained by first producing the CI0-1/CI0-2/CI0-3 closure of $\m{M}$ or as the ci0-1/ci0-2/ci0-3 closure of $\om{E}$. As mentioned above, the worst-case scenario for either alternative is computationally demanding. The complexity of the average case is unknown. However, we believe that Lemmas \ref{lem:G2P} and \ref{lem:G2Pb} are of practical interest when all one has access to is a probability distribution $p(V)$, e.g. the empirical distribution derived from a sample. In that case, the independence model induced by $p(V)$ can be represented by the elementary triplets $i \ci j | K$ such that $i \ci_p j | K$ holds. To see it, recall from Section \ref{sec:preliminaries} that the independence model induced by a probability distribution always satisfies the CI0-1 properties. Note that the process of finding the elementary triplets may be sped up by using the ci0-1 properties to derive elementary triplets from previously obtained elementary triplets, and so avoiding checking some pairwise independences in $p(V)$. One can instead use the ci0-2 or ci0-3 properties if it is known that $p(V)$ is strictly positive or regular Gaussian. This speeding up is warranted from the fact that the elementary triplet representation must be closed under ci0-1/ci0-2/ci0-3 by Lemmas \ref{lem:G2P} and \ref{lem:G2Pb}. For instance, having found that $i \ci_p j | \emptyset$ holds implies that $i \ci j | \emptyset$ must in the representation of the independence model induced by $p(V)$, which implies that so does $j \ci i | \emptyset$ by ci0. So, there is no need to check whether $j \ci_p i | \emptyset$ holds. This approach (without the speeding up sketched) of representing the independence model induced by a probability distribution with its elementary triplets has been instrumental in developing exact and assumption-free learning algorithms for chain graphs and acyclic directed mixed graphs \cite{Penna2016,Sonntagetal.2015}. One may argue that there is no need to produce a concise representation of $p(V)$ such as the elementary triplet representation, since it takes time and storage space and it provides no additional information about $p(V)$. However, some operations with independence models are not easy to perform without representing the independence models explicitly, e.g. it is not clear to us how to compute the intersection of the independence models induced by two probability distributions without representing the independence models in any way whereas, as we will see in Section \ref{sec:operations}, this is a straightforward question to answer from their elementary triplet representations. 

For simplicity, all the results in the sequel assume that $\m{M}$ and $\m{E}$ satisfy CI0-1/CI0-2/CI0-3 and ci0-1/ci0-2/ci0-3. Thanks to Lemma \ref{lem:G2Pb}, these assumptions can be dropped by replacing $\m{M}$, $\m{E}$, $\om{M}$ and $\om{E}$ in the results below with $\m{M}^*$, $\m{E}^*$, $\om{M}^*$ and $\om{E}^*$.

Let $I = i_1 \ldots i_m$ and $J = j_1 \ldots j_n$. In order to decide whether $I \ci_{\om{M}} J | K$, the definition of ${\om{M}}$ implies checking whether $m \cdot n \cdot 2^{(m+n-2)}$ elementary triplets are in ${\m{E}}$. The following lemma simplifies this when ${\m{E}}$ satisfies ci0-1, as it implies checking $m \cdot n$ elementary triplets. When ${\m{E}}$ satisfies ci0-2 or ci0-3, the lemma simplifies the decision even further as the conditioning sets of the elementary triplets checked have all the same size or form. 

\begin{lemma}\label{lem:simplification}
Let $\m{E}$ denote a set of elementary triplets. Let $\om{M}_1 = \{ I \ci J | K : i_s \ci_{\m{E}} j_t | i_1 \ldots i_{s-1} j_1 \ldots j_{t-1} K$ for all $1 \leq s \leq m$ and $1 \leq t \leq n \}$, $\om{M}_2 = \{ I \ci J | K : i \ci_{\m{E}} j | (I \setminus i) (J \setminus j) K$ for all $i \in I$ and $j \in J \}$, and $\om{M}_3 = \{ I \ci J | K : i \ci_{\m{E}} j | K$ for all $i \in I$ and $j \in J \}$. If ${\m{E}}$ satisfies ci0-1, then ${\om{M}} = \om{M}_1$. If ${\m{E}}$ satisfies ci0-2, then ${\om{M}} = \om{M}_2$. If ${\m{E}}$ satisfies ci0-3, then ${\om{M}} = \om{M}_3$.
\end{lemma}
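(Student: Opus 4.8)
The plan is to prove each of the three equalities through the two inclusions $\om{M} \subseteq \om{M}_i$ and $\om{M}_i \subseteq \om{M}$ for $i \in \{1,2,3\}$. The inclusions $\om{M} \subseteq \om{M}_i$ hold without any assumption on $\m{E}$ and are immediate from the definitions: if $I \ci_{\om{M}} J | K$ then $i \ci_{\m{E}} j | M$ for every $i \in I$, $j \in J$ and $K \subseteq M \subseteq (I \setminus i)(J \setminus j) K$, and the particular conditioning sets singled out by $\om{M}_1$ (the chain $i_1 \ldots i_{s-1} j_1 \ldots j_{t-1} K$), by $\om{M}_2$ (the maximal set $(I \setminus i)(J \setminus j) K$) and by $\om{M}_3$ (the minimal set $K$) all lie in that range. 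So the defining condition of each $\om{M}_i$ is merely a subset of the conditions defining $\om{M}$, and all the real work is in the reverse inclusions.

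For the reverse inclusions I would first reduce everything to a statement about $\om{M}$ alone. By Lemma \ref{lem:G2P}, when $\m{E}$ satisfies ci0-1/ci0-2/ci0-3 the set $\om{M} = m(\m{E})$ satisfies CI0-1/CI0-2/CI0-3, and its elementary triplets are exactly those in $\m{E}$, so that $i \ci_{\m{E}} j | M$ is equivalent to $i \ci_{\om{M}} j | M$. Consequently the defining condition of each $\om{M}_i$ can be read as a family of elementary independences in $\om{M}$, while $I \ci J | K \in \om{M}$ is just $I \ci_{\om{M}} J | K$. Hence $\om{M}_i \subseteq \om{M}$ reduces to showing that the corresponding family of elementary $\om{M}$-independences entails the block independence $I \ci_{\om{M}} J | K$, using only the standard properties CI0-1/CI0-2/CI0-3 that $\om{M}$ now enjoys.

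For $\om{M}_1$ this is the familiar chain rule, obtained from the contraction part of CI1. Writing $I = i_1 \ldots i_m$ and $J = j_1 \ldots j_n$, I would fix $t$ and grow the left block over $s$: from $j_t \ci_{\om{M}} i_1 | j_1 \ldots j_{t-1} K$, repeatedly contracting with the symmetric images $j_t \ci_{\om{M}} i_s | i_1 \ldots i_{s-1} j_1 \ldots j_{t-1} K$ of the assumed triplets yields $I \ci_{\om{M}} j_t | j_1 \ldots j_{t-1} K$ for each $t$; a second round of contractions over $t$ then assembles $I \ci_{\om{M}} J | K$. For $\om{M}_3$ the buildup instead uses composition (CI3): for a fixed $j$, composing the singleton independences $j \ci_{\om{M}} i | K$ over $i$ gives $I \ci_{\om{M}} j | K$, and composing these over $j$ gives $I \ci_{\om{M}} J | K$.

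The delicate case, which I expect to be the main obstacle, is $\om{M}_2$ under ci0-2, where the buildup rests on intersection (CI2). Here the data are the fully saturated triplets $i \ci_{\om{M}} j | (I \setminus i)(J \setminus j) K$, and intersection must be applied to shrink the conditioning set as block elements are merged. Fixing $j$ and writing $K' = (J \setminus j) K$, I would merge the $i$'s one at a time: given $j \ci_{\om{M}} i_1 \ldots i_{s-1} | i_s \ldots i_m K'$, pairing it with the saturated triplet $j \ci_{\om{M}} i_s | i_1 \ldots i_{s-1} i_{s+1} \ldots i_m K'$ and applying intersection absorbs $i_s$ and removes it from the conditioning set, giving $j \ci_{\om{M}} i_1 \ldots i_s | i_{s+1} \ldots i_m K'$; iterating produces $I \ci_{\om{M}} j | (J \setminus j) K$, after which a second merge over $j$ delivers $I \ci_{\om{M}} J | K$. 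The bookkeeping is heavier than in the other two cases precisely because intersection is a two-premise rule: at each step one premise is the inductive hypothesis and the other is drawn from the saturated family, and the two conditioning sets must be matched exactly. It is also here that closure of $\om{M}$ under CI0-2 is essential, since it guarantees that every intermediate multivariate triplet generated by the merges remains in $\om{M}$ and is therefore available as a premise for the next application of intersection.
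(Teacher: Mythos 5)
Your proof is correct and follows essentially the same route as the paper's: the inclusions $\om{M} \subseteq \om{M}_i$ are dismissed as immediate, and the reverse inclusions are established by inductively merging elements into blocks, using Lemma \ref{lem:G2P} to transfer the ci0-1/ci0-2/ci0-3 assumptions on $\m{E}$ into CI0-1/CI0-2/CI0-3 closure of $\om{M}$, with contraction driving the $\om{M}_1$ case, intersection-plus-contraction the $\om{M}_2$ case, and composition-plus-contraction the $\om{M}_3$ case. The only (harmless) divergence is in the ci0-1 case, where the paper works purely at the elementary level---using ci1 to obtain the chain condition for every permutation of $I$ and $J$, and hence every intermediate conditioning set demanded by the definition of $m(\m{E})$---whereas you lift to $\om{M}$ and contract at the block level; both arguments are sound.
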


\begin{proof}

\underline{Proof for ci0-1}

It suffices to prove that $\om{M}_1 \subseteq {\om{M}}$ because clearly ${\om{M}} \subseteq \om{M}_1$. Assume that $I \ci_{\om{M}_1} J | K$. Then, $i_s \ci_{\m{E}} j_t | i_1 \ldots i_{s-1} j_1 \ldots j_{t-1} K$ and $i_s \ci_{\m{E}} j_{t+1} | i_1 \ldots i_{s-1} j_1 \ldots j_{t} K$ by definition of $\om{M}_1$. Then, $i_s \ci_{\m{E}} j_{t+1} | i_1 \ldots i_{s-1} j_1 \ldots j_{t-1} K$ and $i_s \ci_{\m{E}} j_t | i_1 \ldots i_{s-1}$ $j_1 \ldots j_{t-1} j_{t+1} K$ by ci1. Then, $i_s \ci_{\om{M}} j_{t+1} | i_1 \ldots i_{s-1} j_1 \ldots j_{t-1} K$ and $i_s \ci_{\om{M}} j_t | i_1 \ldots i_{s-1}$ $j_1 \ldots j_{t-1} j_{t+1} K$ by definition of ${\om{M}}$. By repeating this reasoning, we can then conclude that $i_s \ci_{\om{M}} j_{\sigma(t)} | i_1 \ldots i_{s-1} j_{\sigma(1)} \ldots j_{\sigma(t-1)} K$ for any permutation $\sigma$ of the set $\{ 1 \ldots n \}$. By following an analogous reasoning for $i_s$ instead of $j_t$, we can then conclude that $i_{\varsigma(s)} \ci_{\om{M}} j_{\sigma(t)} | i_{\varsigma(1)} \ldots i_{\varsigma(s-1)} j_{\sigma(1)} \ldots j_{\sigma(t-1)} K$ for any permutations $\sigma$ and $\varsigma$ of the sets $\{ 1 \ldots n \}$ and $\{ 1 \ldots m \}$. This implies the desired result by definition of ${\om{M}}$.

\underline{Proof for ci0-2}

It suffices to prove that $\om{M}_2 \subseteq {\om{M}}$ because clearly ${\om{M}} \subseteq \om{M}_2$. Note that ${\om{M}}$ satisfies CI0-2 by Lemma \ref{lem:G2P}. Assume that $I \ci_{\om{M}_2} J | K$.

\begin{enumerate}

\item $i_1 \ci_{\om{M}} j_1 | (I \setminus i_1) (J \setminus j_1) K$ and $i_1 \ci_{\om{M}} j_2 | (I \setminus i_1) (J \setminus j_2) K$ follow from $i_1 \ci_{\m{E}}$ $j_1 | (I \setminus i_1) (J \setminus j_1) K$ and $i_1 \ci_{\m{E}} j_2 | (I \setminus i_1) (J \setminus j_2) K$ by definition of ${\om{M}}$.\label{eq:4.1}

\item $i_1 \ci_{\om{M}} j_1 | (I \setminus i_1) (J \setminus j_1 j_2) K$ by CI2 on (\ref{eq:4.1}), which together with (\ref{eq:4.1}) imply $i_1 \ci_{\om{M}} j_1 j_2 | (I \setminus i_1) (J \setminus j_1 j_2) K$ by CI1.\label{eq:4.2}

\item $i_1 \ci_{\om{M}} j_3 | (I \setminus i_1) (J \setminus j_3) K$ follows from $i_1 \ci_{\m{E}} j_3 | (I \setminus i_1) (J \setminus j_3) K$ by definition of ${\om{M}}$.\label{eq:4.3}

\item $i_1 \ci_{\om{M}} j_1 j_2 | (I \setminus i_1) (J \setminus j_1 j_2 j_3) K$ by CI2 on (\ref{eq:4.2}) and (\ref{eq:4.3}), which together with (\ref{eq:4.3}) imply $i_1 \ci_{\om{M}} j_1 j_2 j_3 | (I \setminus i_1) (J \setminus j_1 j_2 j_3) K$ by CI1.

\end{enumerate}

By continuing with the reasoning above, we can conclude that $i_1 \ci_{\om{M}} J | (I \setminus i_1) K$. Moreover, $i_2 \ci_{\om{M}} J | (I \setminus i_2) K$ by a reasoning similar to (1-4) and, thus, $i_1 i_2 \ci_{\om{M}} J | (I \setminus i_1 i_2) K$ by an argument similar to (2). Moreover, $i_3 \ci_{\om{M}} J | (I \setminus i_3) K$ by a reasoning similar to (1-4) and, thus, $i_1 i_2 i_3 \ci_{\om{M}} J | (I \setminus i_1 i_2 i_3) K$ by an argument similar to (4). Continuing with this process gives the desired result.

\underline{Proof for ci0-3}

It suffices to prove that $\om{M}_3 \subseteq {\om{M}}$ because clearly ${\om{M}} \subseteq \om{M}_3$. Note that ${\om{M}}$ satisfies CI0-3 by Lemma \ref{lem:G2P}. Assume that $I \ci_{\om{M}_3} J | K$.

\begin{enumerate}\setcounter{enumi}{4}

\item $i_1 \ci_{\om{M}} j_1 | K$ and $i_1 \ci_{\om{M}} j_2 | K$ follow from $i_1 \ci_{\m{E}} j_1 | K$ and $i_1 \ci_{\m{E}} j_2 | K$ by definition of ${\om{M}}$.\label{eq:5.1}

\item $i_1 \ci_{\om{M}} j_1 | j_2 K$ by CI3 on (\ref{eq:5.1}), which together with (\ref{eq:5.1}) imply $i_1 \ci_{\om{M}} j_1 j_2 | K$ by CI1.\label{eq:5.2}

\item $i_1 \ci_{\om{M}} j_3 | K$ follows from $i_1 \ci_{\m{E}} j_3 | K$ by definition of ${\om{M}}$.\label{eq:5.3}

\item $i_1 \ci_{\om{M}} j_1 j_2 | j_3 K$ by CI3 on (\ref{eq:5.2}) and (\ref{eq:5.3}), which together with (\ref{eq:5.3}) imply $i_1 \ci_{\om{M}}$ $j_1 j_2 j_3 | K$ by CI1.

\end{enumerate}

By continuing with the reasoning above, we can conclude that $i_1 \ci_{\om{M}} J | K$. Moreover, $i_2 \ci_{\om{M}} J | K$ by a reasoning similar to (5-8) and, thus, $i_1 i_2 \ci_{\om{M}} J | K$ by an argument similar to (6). Moreover, $i_3 \ci_{\om{M}} J | K$ by a reasoning similar to (5-8) and, thus, $i_1 i_2 i_3 \ci_{\om{M}} J | K$ by an argument similar to (8). Continuing with this process gives the desired result.
\end{proof}

As mentioned in the introduction, another set of distinguished triplets in $\m{M}$ that can be used to represent it is the set of dominant triplets \cite{BaiolettiBV09,deWaal:2004:SIC:1036843.1036857,LopatatzidisvanderGaag,studeny1998complexity}. The following lemma shows how to find these triplets with the help of ${\om{E}}$.

\begin{lemma}\label{lem:dominant}
Let $\m{M}$ denote a set of triplets. If $\m{M}$ satisfies CI0-1, then $I \ci J | K$ is a dominant triplet in $\m{M}$ if and only if $I = i_1 \ldots i_m$ and $J = j_1 \ldots j_n$ are two maximal sets such that $i_s \ci_{\om{E}} j_t | i_1 \ldots i_{s-1} j_1 \ldots j_{t-1} K$ for all $1 \leq s \leq m$ and $1 \leq t \leq n$ and, for all $k \in K$, $i_s \nci_{\om{E}} k | i_1 \ldots i_{s-1} J (K \setminus k)$ and $k \nci_{\om{E}} j_t | I j_1 \ldots j_{t-1} (K \setminus k)$ for some $1 \leq s \leq m$ and $1 \leq t \leq n$. If $\m{M}$ satisfies CI0-2, then $I \ci J | K$ is a dominant triplet in $\m{M}$ if and only if $I$ and $J$ are two maximal sets such that $i \ci_{\om{E}} j | (I \setminus i) (J \setminus j) K$ for all $i \in I$ and $j \in J$ and, for all $k \in K$, $i \nci_{\om{E}} k | (I \setminus i) J (K \setminus k)$ and $k \nci_{\om{E}} j | I (J \setminus j) (K \setminus k)$ for some $i \in I$ and $j \in J$. If $\m{M}$ satisfies CI0-3, then $I \ci J | K$ is a dominant triplet in $\m{M}$ if and only if $I$ and $J$ are two maximal sets such that $i \ci_{\om{E}} j | K$ for all $i \in I$ and $j \in J$ and, for all $k \in K$, $i \nci_{\om{E}} k | K \setminus k$ and $k \nci_{\om{E}} j | K \setminus k$ for some $i \in I$ and $j \in J$.
\end{lemma}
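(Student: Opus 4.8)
The plan is to reduce the notion of dominance to a few local conditions phrased directly in terms of $\m{M}$, and only then to translate those conditions into elementary triplets by means of Lemma~\ref{lem:simplification}. Concretely, I would first prove that $I \ci J | K$ is dominant in $\m{M}$ if and only if (i) $I \ci_{\m{M}} J | K$, (ii) $I$ and $J$ are maximal, in the sense that no element outside $I J K$ can be added to $I$ or to $J$ while keeping the conditioning set $K$ and staying in $\m{M}$, and (iii) no $k \in K$ can be shifted out of the conditioning set into $I$ or into $J$, i.e. $I k \nci_{\m{M}} J | K \setminus k$ and $I \nci_{\m{M}} J k | K \setminus k$ for every $k \in K$. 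The forward implication is immediate from the definition of domination: any violation of (ii) or (iii) exhibits a strictly larger triplet (of the form $I a \ci J | K$, $I \ci J a | K$, $I k \ci J | K \setminus k$ or $I \ci J k | K \setminus k$) that dominates $I \ci J | K$.

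For the converse I would show that every dominating triplet can be pared down, using only decomposition and weak union (the $\Leftarrow$ direction of CI1), to one of these single-element dominators. Given a dominator $A \ci B | C \neq I \ci J | K$ with $I \subseteq A$, $J \subseteq B$ and $C \subseteq K \subseteq (A \setminus I)(B \setminus J) C$, one inspects an element witnessing the difference: if $A \supsetneq I$ (the case $B \supsetneq J$ being symmetric) pick $a \in A \setminus I$; if $a \notin K$ this yields the enlargement $I a \ci_{\m{M}} J | K$ contradicting (ii), and if $a \in K$ it yields the shift $I a \ci_{\m{M}} J | K \setminus a$ contradicting (iii). The remaining case $A = I$, $B = J$ forces $C = K$ through the inclusion $K \setminus C \subseteq (A \setminus I)(B \setminus J) = \emptyset$, so the dominator is $I \ci J | K$ itself. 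This establishes the equivalence of dominance with (i)--(iii).

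It then remains to rewrite (i)--(iii) in terms of $\om{E}$. Condition (i) and the maximality in (ii) become the first block of the characterisation directly through Lemma~\ref{lem:simplification}, in the form $\om{M}_1$, $\om{M}_2$ or $\om{M}_3$ appropriate to CI0-1, CI0-2 or CI0-3. For (iii) I would apply Lemma~\ref{lem:simplification} to the shifted triplets, ordering the moved element $k$ last in $I k$ (respectively in $J k$); this turns $I k \ci_{\m{M}} J | K \setminus k$ and $I \ci_{\m{M}} J k | K \setminus k$ into the stated elementary conditions $k \nci_{\om{E}} j_t | I j_1 \ldots j_{t-1} (K \setminus k)$ and $i_s \nci_{\om{E}} k | i_1 \ldots i_{s-1} J (K \setminus k)$, and likewise for the $\om{M}_2$ and $\om{M}_3$ forms.

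The main obstacle will be precisely this last translation, because the conditioning sets do not line up naively. The shift conditions in (iii) are, via CI1 applied to (i), most transparently equivalent to $k \nci_{\m{M}} I | K \setminus k$ and $k \nci_{\m{M}} J | K \setminus k$, whereas the stated elementary conditions amount to $k \nci_{\m{M}} I | J (K \setminus k)$ and $k \nci_{\m{M}} J | I (K \setminus k)$, carrying the extra sets $J$ and $I$ in the conditioning. Reconciling the two is where the stronger axioms enter: under CI0-2 the intersection property CI2, combined with (i), yields $k \ci_{\m{M}} I | J (K \setminus k) \Leftrightarrow k \ci_{\m{M}} I | K \setminus k$ and the analogous statement for $J$, so the formulations coincide; under CI0-3 the $\om{M}_3$ form already conditions on $K \setminus k$ alone, so the mismatch does not arise. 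I expect the CI0-1 case to be the most delicate point to verify, since there only contraction and weak union are available to relate the two conditioning sets, and the argument must be carried out with care.
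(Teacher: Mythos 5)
Your reduction of dominance to the three conditions (i) $I \ci_{\m{M}} J | K$, (ii) maximality of $I$ and $J$ at fixed $K$, and (iii) $I k \nci_{\m{M}} J | K \setminus k$ and $I \nci_{\m{M}} J k | K \setminus k$ for all $k \in K$ is correct, and your proof of it needs only symmetry, decomposition and weak union; it is a cleaner organization than the paper's proof, which runs the contradiction argument directly at the level of grids of elementary triplets via Lemmas \ref{lem:G2P} and \ref{lem:simplification}. Your CI0-2 reconciliation (using CI2 together with (i) to get $k \ci_{\m{M}} I | J(K\setminus k) \Leftrightarrow k \ci_{\m{M}} I | K \setminus k$) and your observation that no mismatch arises under CI0-3 are also sound, so your plan does yield complete proofs of the second and third claims of the lemma.

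The gap is exactly the step you flagged and postponed: under CI0-1 you never reconcile the stated elementary conditions with (iii), and this cannot be done, because the CI0-1 claim of the lemma is false as stated. Take $V = \{1,2,3\}$ and let $\m{M}$ be the independence model of three perfectly correlated binary variables, i.e. $\m{M} = \{1 \ci 2 | 3,\ 1 \ci 3 | 2,\ 2 \ci 3 | 1\}$ together with their symmetric images; this $\m{M}$ satisfies CI0-1 (it is induced by a probability distribution) but not CI2, and here $\om{E} = \m{M}$. Every triplet in $\m{M}$ is dominant, yet $1 \ci 2 | 3$ violates the right-hand side of the lemma, because $1 \ci_{\om{E}} 3 | 2$ and $3 \ci_{\om{E}} 2 | 1$ both hold. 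Note that your conditions (i)--(iii) do hold for this triplet (neither $13 \ci 2 | \emptyset$ nor $1 \ci 23 | \emptyset$ is in $\m{M}$), which confirms that your characterization is the correct one and that the lemma's conditions are strictly stronger than (iii) under CI0-1. In fact, the paper's own proof of the only-if part commits precisely the inference you were suspicious of: from $k \ci_{\om{E}} j_t | I j_1 \ldots j_{t-1} (K\setminus k)$ for all $t$ it concludes that $Ik \ci_{\m{M}} J | K \setminus k$ ``by Lemmas \ref{lem:G2P} and \ref{lem:simplification}'', which amounts to deriving $Ik \ci J | K\setminus k$ from $k \ci J | I(K\setminus k)$ and $I \ci J | K$ --- an instance of intersection, unavailable under CI0-1 (in the example above this step would yield $13 \ci 2 | \emptyset$, which fails). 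So your instinct was right, but the conclusion is stronger than ``delicate'': the CI0-1 case of the statement is irreparable unless its elementary conditions are weakened to match your (iii), and only the CI0-2 and CI0-3 cases can be proven as stated.
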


\begin{proof}

We prove the lemma when $\m{M}$ satisfies CI0-1. The other two cases can be proven in much the same way. To see the if part, note that $I \ci_{\m{M}} J | K$ by Lemmas \ref{lem:G2P} and \ref{lem:simplification}. Moreover, assume to the contrary that there is a triplet $I' \ci_{\m{M}} J' | K'$ that dominates $I \ci_{\m{M}} J | K$. Consider the following two cases: $K' = K$ and $K' \subset K$. In the first case, CI0-1 on $I' \ci_{\m{M}} J' | K'$ implies that $I i_{m+1} \ci_{\m{M}} J | K$ or $I \ci_{\m{M}} J j_{n+1} | K$ with $i_{m+1} \in I' \setminus I$ and $j_{n+1} \in J' \setminus J$. Assume the latter without loss of generality. Then, CI0-1 implies that $i_s \ci_{\om{E}} j_t | i_1 \ldots i_{s-1} j_1 \ldots j_{t-1} K$ for all $1 \leq s \leq m$ and $1 \leq t \leq n+1$. This contradicts the maximality of $J$. In the second case, CI0-1 on $I' \ci_{\m{M}} J' | K'$ implies that $I k \ci_{\m{M}} J | K \setminus k$ or $I \ci_{\m{M}} J k | K \setminus k$ with $k \in K$. Assume the latter without loss of generality. Then, CI0-1 implies that $i_s \ci_{\om{E}} k | i_1 \ldots i_{s-1} J (K \setminus k)$ for all $1 \leq s \leq m$, which contradicts the assumptions of the lemma.

To see the only if part, note that CI0-1 implies that $i_s \ci_{\om{E}} j_t | i_1 \ldots i_{s-1} j_1 \ldots j_{t-1} K$ for all $1 \leq s \leq m$ and $1 \leq t \leq n$. Moreover, assume to the contrary that for some $k \in K$, $i_s \ci_{\om{E}} k | i_1 \ldots i_{s-1} J (K \setminus k)$ for all $1 \leq s \leq m$ or $k \ci_{\om{E}} j_t | I j_1 \ldots j_{t-1} (K \setminus k)$ for all $1 \leq t \leq n$. Assume the latter without loss of generality. Then, $I k \ci_{\m{M}} J | K \setminus k$ by Lemmas \ref{lem:G2P} and \ref{lem:simplification}, which implies that $ I \ci_{\m{M}} J | K$ is not a dominant triplet in ${\m{M}}$, which is a contradiction. Finally, note that $I$ and $J$ must be maximal sets satisfying the properties proven in this paragraph because, otherwise, the previous paragraph implies that there is a triplet in ${\m{M}}$ that dominates $I \ci_{\m{M}} J | K$.
\end{proof}

A natural question to ponder is whether it is better to represent an independence model by its elementary or dominant triplets. In terms of storage space, it seems that the dominant triplet representation should be preferred. For instance, for the independence model in Example \ref{ex:one}, there are 82 elementary triplets but only 12 dominant triplets and nine non-symmetric dominant triplets \cite{LopatatzidisvanderGaag}. For the independence model in Example \ref{ex:two}, there are 112 elementary triplets but only two non-symmetric dominant triplets, as we will see later. In terms of running time, the answer is less clear. As mentioned before, finding $\om{E}^*$ for a given set of triplets $\m{M}$ implies producing the CI0-1/CI0-2/CI0-3 closure of $\m{M}$ or the ci0-1/ci0-2/ci0-3 closure of $\om{E}$. The average case complexity of either case is unknown. The algorithms in \cite{BaiolettiBV09,deWaal:2004:SIC:1036843.1036857,LopatatzidisvanderGaag,studeny1998complexity} for finding the dominant triplets in $\m{M}$ are conceptually more involved but they could be faster than finding $\om{E}^*$. Performing an empirical comparison of the two alternatives is definitely an interesting research project. However, it is beyond the scope of this work. Moreover, the methods for finding dominant triplets take a set of triplets as input. It is not clear to us how to run them when all we have access to is a probability distribution $p(V)$, e.g. the empirical distribution derived from a sample. As discussed before, finding the elementary triplet representation in that scenario is conceptually easy. Yet another dimension to compare elementary and dominant triplet representations is the operations that each alternative allows to perform efficiently, e.g. there is no method to our knowledge for computing the intersection of the CI0-1 closures of two sets of triplets when all we have is their dominant triplet representations whereas, as we will see in Section \ref{sec:operations}, this is a straightforward question to answer from their elementary triplet representations. That is why we prefer to see elementary and dominant triplets as complementary rather than competing alternatives to represent independence models: Depending on task at hand, one or the other may be preferred.

\begin{figure}[!t]
\scalebox{0.72}{ 
\begin{tikzpicture}[inner sep=1mm]
\node at (-2,5) (L) {$1 \ci_{\om{E}} 5 | 6$};
\node at (4,5) (L2) {$2 \ci_{\om{E}} 5 | 16$};
\node at (0,4) (K) {$1 \ci_{\om{E}} 6 | \emptyset$};
\node at (0,3) (J) {$1 \ci_{\om{E}} 4 | 6$};
\node at (2,4) (K2) {$2 \ci_{\om{E}} 6 | 1$};
\node at (2,3) (J2) {$2 \ci_{\om{E}} 4 | 16$};
\node at (0,2) (I) {$1 \ci_{\om{E}} 5 | 46$};
\node at (0,1) (H) {$1 \ci_{\om{E}} 6 | 4$};
\node at (2,2) (I2) {$2 \ci_{\om{E}} 5 | 146$};
\node at (2,1) (H2) {$2 \ci_{\om{E}} 6 | 14$};
\node at (0,0) (A) {$1 \ci_{\om{E}} 4 | \emptyset$};
\node at (0,-1) (B) {$1 \ci_{\om{E}} 5 | 4$};
\node at (0,-2) (C) {$1 \ci_{\om{E}} 6 | 45$};
\node at (0,-3) (D) {$1 \ci_{\om{E}} 4 | 5$};
\node at (0,-4) (E) {$1 \ci_{\om{E}} 5 | \emptyset$};
\node at (0,-5) (F) {$1 \ci_{\om{E}} 6 | 5$};
\node at (-2,-6) (G) {$1 \ci_{\om{E}} 4 | 56$};
\node at (2,0) (A2) {$2 \ci_{\om{E}} 4 | 1$};
\node at (2,-1) (B2) {$2 \ci_{\om{E}} 5 | 14$};
\node at (2,-2) (C2) {$2 \ci_{\om{E}} 6 | 145$};
\node at (2,-3) (D2) {$2 \ci_{\om{E}} 4 | 15$};
\node at (2,-4) (E2) {$2 \ci_{\om{E}} 5 | 1$};
\node at (2,-5) (F2) {$2 \ci_{\om{E}} 6 | 15$};
\node at (4,-6) (G2) {$2 \ci_{\om{E}} 4 | 156$};

\node at (6,-7) (A3) {$3 \ci_{\om{E}} 4 | 12$};
\node at (-4,0) (B3) {$3 \ci_{\om{E}} 4 | 1$};
\node at (-6,0) (C3) {$2 \ci_{\om{E}} 4 | 13$};
\node at (-4,-13) (B3b) {$3 \ci_{\om{E}} 4 | 2$};
\node at (-6,-13) (C3b) {$1 \ci_{\om{E}} 4 | 23$};
\node at (-6,-7) (D3) {$3 \ci_{\om{E}} 4 | \emptyset$};
\node at (-6,-6) (E3) {$1 \ci_{\om{E}} 4 | 3$};
\node at (-6,-8) (E3b) {$2 \ci_{\om{E}} 4 | 3$};

\node at (-2,-8) (Lb) {$2 \ci_{\om{E}} 5 | 6$};
\node at (4,-8) (L2b) {$1 \ci_{\om{E}} 5 | 26$};
\node at (0,-9) (Kb) {$2 \ci_{\om{E}} 6 | \emptyset$};
\node at (0,-10) (Jb) {$2 \ci_{\om{E}} 4 | 6$};
\node at (2,-9) (K2b) {$1 \ci_{\om{E}} 6 | 2$};
\node at (2,-10) (J2b) {$1 \ci_{\om{E}} 4 | 26$};
\node at (0,-11) (Ib) {$2 \ci_{\om{E}} 5 | 46$};
\node at (0,-12) (Hb) {$2 \ci_{\om{E}} 6 | 4$};
\node at (2,-11) (I2b) {$1 \ci_{\om{E}} 5 | 246$};
\node at (2,-12) (H2b) {$1 \ci_{\om{E}} 6 | 24$};
\node at (0,-13) (Ab) {$2 \ci_{\om{E}} 4 | \emptyset$};
\node at (0,-14) (Bb) {$2 \ci_{\om{E}} 5 | 4$};
\node at (0,-15) (Cb) {$2 \ci_{\om{E}} 6 | 45$};
\node at (0,-16) (Db) {$2 \ci_{\om{E}} 4 | 5$};
\node at (0,-17) (Eb) {$2 \ci_{\om{E}} 5 | \emptyset$};
\node at (0,-18) (Fb) {$2 \ci_{\om{E}} 6 | 5$};
\node at (-2,-19) (Gb) {$2 \ci_{\om{E}} 4 | 56$};
\node at (2,-13) (A2b) {$1 \ci_{\om{E}} 4 | 2$};
\node at (2,-14) (B2b) {$1 \ci_{\om{E}} 5 | 24$};
\node at (2,-15) (C2b) {$1 \ci_{\om{E}} 6 | 245$};
\node at (2,-16) (D2b) {$1 \ci_{\om{E}} 4 | 25$};
\node at (2,-17) (E2b) {$1 \ci_{\om{E}} 5 | 2$};
\node at (2,-18) (F2b) {$1 \ci_{\om{E}} 6 | 25$};
\node at (4,-19) (G2b) {$1 \ci_{\om{E}} 4 | 256$};

\path[->] (L) edge (G);
\path[->] (L2) edge (G2);
\path[->] (K) edge (L);
\path[->] (K2) edge (L2);
\path[->,dashed] (L) edge (L2);
\path[->] (K) edge (J);
\path[->] (J) edge (I);
\path[->] (K2) edge (J2);
\path[->] (J2) edge (I2);
\path[->,dashed] (K) edge (K2);
\path[->,dashed] (J) edge (J2);
\path[->] (A) edge (H);
\path[->] (H) edge (I);
\path[->] (A2) edge (H2);
\path[->] (H2) edge (I2);
\path[->,dashed] (H) edge (H2);
\path[->,dashed] (I) edge (I2);
\path[->] (A) edge (B);
\path[->] (B) edge (C);
\path[->] (D) edge (C);
\path[->] (E) edge (D);
\path[->] (E) edge (F);
\path[->] (F) edge (G);
\path[->] (A2) edge (B2);
\path[->] (B2) edge (C2);
\path[->] (D2) edge (C2);
\path[->] (E2) edge (D2);
\path[->] (E2) edge (F2);
\path[->] (F2) edge (G2);
\path[->,dashed] (A) edge (A2);
\path[->,dashed] (B) edge (B2);
\path[->,dashed] (C) edge (C2);
\path[->,dashed] (D) edge (D2);
\path[->,dashed] (E) edge (E2);
\path[->,dashed] (F) edge (F2);
\path[->,dashed] (G) edge (G2);

\path[->,dashed] (A2) edge [bend left] (A3);
\path[->,dashed] (A) edge (B3);
\path[->,dashed] (B3) edge (C3);
\path[->,dashed] (Ab) edge (B3b);
\path[->,dashed] (B3b) edge (C3b);
\path[->,dashed] (D3) edge (E3);
\path[->,dashed] (E3) edge (C3);
\path[->,dashed] (D3) edge (E3b);
\path[->,dashed] (E3b) edge (C3b);

\path[->] (Lb) edge (Gb);
\path[->] (L2b) edge (G2b);
\path[->] (Kb) edge (Lb);
\path[->] (K2b) edge (L2b);
\path[->,dashed] (Lb) edge (L2b);
\path[->] (Kb) edge (Jb);
\path[->] (Jb) edge (Ib);
\path[->] (K2b) edge (J2b);
\path[->] (J2b) edge (I2b);
\path[->,dashed] (Kb) edge (K2b);
\path[->,dashed] (Jb) edge (J2b);
\path[->] (Ab) edge (Hb);
\path[->] (Hb) edge (Ib);
\path[->] (A2b) edge (H2b);
\path[->] (H2b) edge (I2b);
\path[->,dashed] (Hb) edge (H2b);
\path[->,dashed] (Ib) edge (I2b);
\path[->] (Ab) edge (Bb);
\path[->] (Bb) edge (Cb);
\path[->] (Db) edge (Cb);
\path[->] (Eb) edge (Db);
\path[->] (Eb) edge (Fb);
\path[->] (Fb) edge (Gb);
\path[->] (A2b) edge (B2b);
\path[->] (B2b) edge (C2b);
\path[->] (D2b) edge (C2b);
\path[->] (E2b) edge (D2b);
\path[->] (E2b) edge (F2b);
\path[->] (F2b) edge (G2b);
\path[->,dashed] (Ab) edge (A2b);
\path[->,dashed] (Bb) edge (B2b);
\path[->,dashed] (Cb) edge (C2b);
\path[->,dashed] (Db) edge (D2b);
\path[->,dashed] (Eb) edge (E2b);
\path[->,dashed] (Fb) edge (F2b);
\path[->,dashed] (Gb) edge (G2b);

\path[->,dashed] (A2b) edge [bend right] (A3);
\end{tikzpicture}
}
\caption{Example of the DAG representation of ${\om{E}}$ (up to symmetry).}\label{fig:example}
\end{figure}
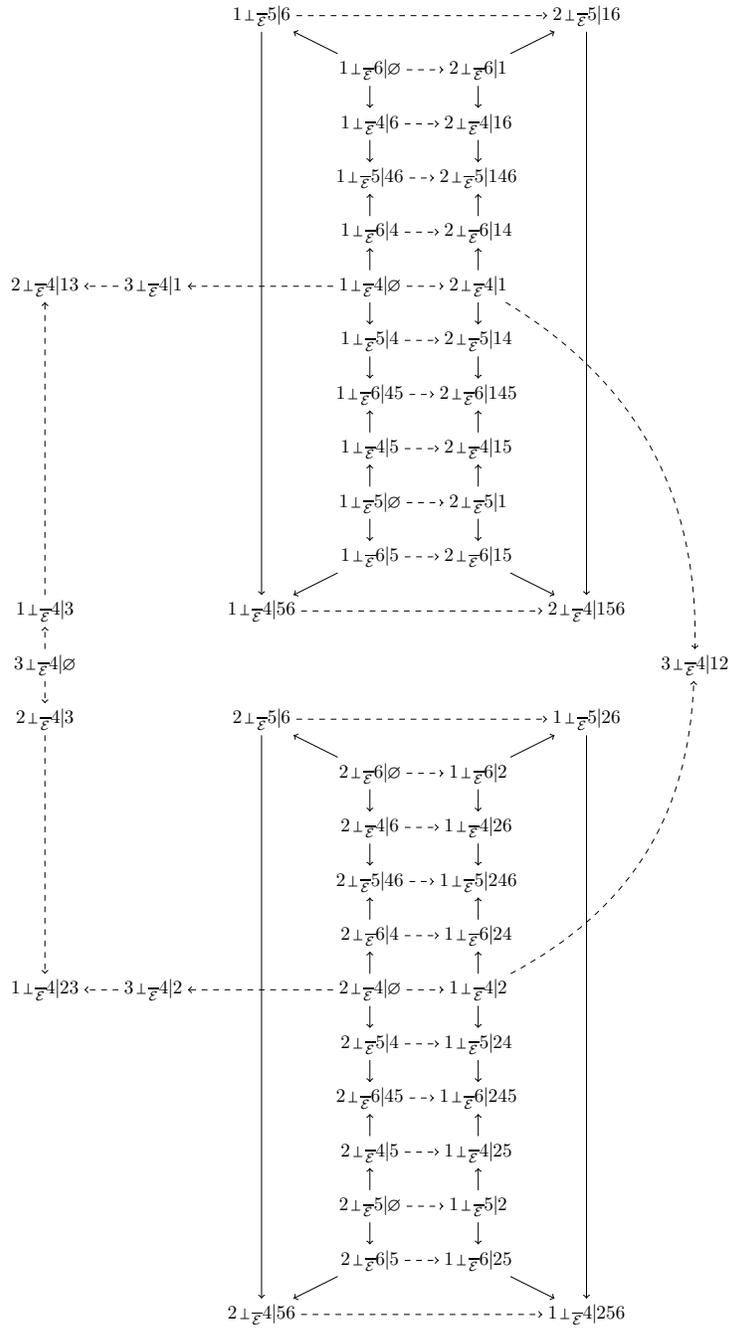

Inspired by \cite{Matus02}, if ${\m{M}}$ satisfies CI0-1 then we can represent ${\om{E}}$ as a DAG. The nodes of the DAG are the elementary triplets in ${\om{E}}$ and the edges of the DAG are $\{ i \ci_{\om{E}} k | L \ra i \ci_{\om{E}} j | k L \} \cup \{ k \ci_{\om{E}} j | L \dashedrightarrow i \ci_{\om{E}} j | k L \}$. See Figure \ref{fig:example} for an example. For the sake of readability, the DAG in the figure does not include symmetric elementary triplets. That is, the complete DAG can be obtained by adding a second copy of the DAG in the figure, replacing every node $i \ci_{\om{E}} j | K$ in the copy with $j \ci_{\om{E}} i | K$, and replacing every edge $\ra$ (respectively $\dashedrightarrow$) in the copy with $\dashedrightarrow$ (respectively $\ra$). We say that a subgraph over $m \cdot n$ nodes of the DAG is a grid if there is a bijection between the nodes of the subgraph and the labels $\{ v_{s,t} : 1 \leq s \leq m, 1 \leq t \leq n \}$ such that the edges of the subgraph are $\{ v_{s,t} \ra v_{s,t+1} :  1 \leq s \leq m, 1 \leq t < n \} \cup \{ v_{s,t} \dashedrightarrow v_{s+1,t} :  1 \leq s < m, 1 \leq t \leq n \}$. For instance, the following subgraph of the DAG in Figure \ref{fig:example} is a grid:

\begin{center}
\scalebox{0.72}{ 
\begin{tikzpicture}[inner sep=1mm]

\node at (0,0) (Bb) {$2 \ci_{\om{E}} 5 | 4$};
\node at (0,-1) (Cb) {$2 \ci_{\om{E}} 6 | 45$};

\node at (3,0) (B2b) {$1 \ci_{\om{E}} 5 | 24$};
\node at (3,-1) (C2b) {$1 \ci_{\om{E}} 6 | 245$};

\path[->] (Bb) edge (Cb);
\path[->] (B2b) edge (C2b);
\path[->,dashed] (Bb) edge (B2b);
\path[->,dashed] (Cb) edge (C2b);
\end{tikzpicture}
}
\end{center}

The following lemma is an immediate consequence of Lemmas \ref{lem:G2P} and \ref{lem:simplification}. 

\begin{lemma}\label{lem:grid}
Let ${\m{M}}$ denote a set of triplets that satisfies CI0-1, and let $I = i_1 \ldots i_m$ and $J = j_1 \ldots j_n$. If the subgraph of the DAG representation of ${\om{E}}$ induced by the set of nodes $\{ i_s \ci_{\om{E}}$ $j_t | i_1 \ldots i_{s-1} j_1 \ldots j_{t-1} K : 1 \leq s \leq m, 1 \leq t \leq n \}$ is a grid, then $I \ci_{\m{M}} J | K$.
\end{lemma}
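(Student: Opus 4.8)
The plan is to reduce the claim to the membership characterization of $m(\om{E})$ provided by Lemma \ref{lem:simplification}, using Lemma \ref{lem:G2P} to move between $\m{M}$ and its elementary triplets $\om{E}$. First I would extract the two facts that the hypothesis on $\m{M}$ makes available: since $\m{M}$ satisfies CI0-1, Lemma \ref{lem:G2P} tells us that $\om{E}$ satisfies ci0-1 and that $\m{M} = m(\om{E})$. Then, applying Lemma \ref{lem:simplification} with the set of elementary triplets taken to be $\om{E}$, the ci0-1 property gives $m(\om{E}) = \om{M}_1$, where membership $I \ci_{\om{M}_1} J | K$ means exactly that $i_s \ci_{\om{E}} j_t | i_1 \ldots i_{s-1} j_1 \ldots j_{t-1} K$ holds for all $1 \le s \le m$ and $1 \le t \le n$.

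The heart of the argument is then the observation that the grid hypothesis supplies precisely this family of elementary independences. The subgraph in question is induced by the set of labels $\{ i_s \ci_{\om{E}} j_t | i_1 \ldots i_{s-1} j_1 \ldots j_{t-1} K : 1 \le s \le m, 1 \le t \le n \}$, and by assumption it is a grid, hence a subgraph over exactly $m \cdot n$ nodes. I would make explicit that this forces each of the $m \cdot n$ specified labels to be a genuine node of the DAG representation of $\om{E}$, i.e. an elementary triplet actually lying in $\om{E}$. In other words, the grid assumption yields $i_s \ci_{\om{E}} j_t | i_1 \ldots i_{s-1} j_1 \ldots j_{t-1} K$ for every $s$ and $t$.

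To finish, I would note that this is exactly the defining membership condition for $\om{M}_1$, so $I \ci_{\om{M}_1} J | K$; since $\om{M}_1 = m(\om{E}) = \m{M}$, this is $I \ci_{\m{M}} J | K$, as required.

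I expect the only step needing care to be the justification that being a grid forces all $m \cdot n$ triplet-labels to be present as nodes, rather than merely a claim about the edges among whatever nodes happen to occur. Once this reading of the grid definition is spelled out, the conclusion is an immediate chaining of Lemmas \ref{lem:G2P} and \ref{lem:simplification}, and the actual edge pattern of the grid is not used.
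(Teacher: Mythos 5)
Your proof is correct and follows exactly the route the paper intends: the paper states Lemma \ref{lem:grid} as an immediate consequence of Lemmas \ref{lem:G2P} and \ref{lem:simplification}, and your argument spells out precisely that chain ($\m{M} = m(\om{E})$, $\om{E}$ satisfies ci0-1, $m(\om{E}) = \om{M}_1$, and the grid hypothesis supplies the $m \cdot n$ elementary triplets as nodes of the DAG, hence members of $\om{E}$). Your added remark that the grid assumption is used only to guarantee the presence of all $m \cdot n$ nodes, not their edge pattern, is a correct and worthwhile clarification.
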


Thanks to Lemmas \ref{lem:dominant} and \ref{lem:grid}, finding dominant triplets can now be reformulated as finding maximal grids in the DAG. Note that this is a purely graphical characterization. For instance, the DAG in Figure \ref{fig:example} has 18 maximal grids: The subgraphs induced by the set of nodes $\{ \sigma(s) \ci_{\om{E}} \varsigma(t) | \sigma(1) \ldots \sigma(s-1) \varsigma(1) \ldots \varsigma(t-1) : 1 \leq s \leq 2, 1 \leq t \leq 3 \}$ where $\sigma$ and $\varsigma$ are permutations of $\{ 1, 2 \}$ and $\{ 4, 5, 6 \}$, and the set of nodes $\{ \pi(s) \ci_{\om{E}} 4 | \pi(1) \ldots \pi(s-1) :  1 \leq s \leq 3 \}$ where $\pi$ is a permutation of $\{ 1, 2, 3 \}$. These grids correspond to the dominant triplets $12 \ci_{\m{M}} 456 | \emptyset$ and $123 \ci_{\m{M}} 4 | \emptyset$. It should be mentioned that the DAG representation of ${\om{E}}$ is a theoretical construct that, in its current form, brings little advantage in practice, since it can get quite large even for small domains.

\section{Operations}\label{sec:operations}

In this section, we discuss how some operations with independence models can be performed with the help of ${\om{E}}$.

\subsection{Membership}\label{sec:membership}

We want to check whether $I \ci_{\m{M}} J | K$, where ${\m{M}}$ denotes a set of triplets satisfying CI0-1/CI0-2/CI0-3. Recall that ${\m{M}}$ can be obtained from ${\om{E}}$ by Lemma \ref{lem:G2P}. Recall also that ${\om{E}}$ satisfies ci0-1/ci0-2/ci0-3 by Lemma \ref{lem:G2P} and, thus, Lemma \ref{lem:simplification} applies to ${\om{E}}$, which simplifies producing ${\m{M}}$ from ${\om{E}}$. Specifically if ${\m{M}}$ satisfies CI0-1, then we can check whether $I \ci_{\m{M}} J | K$ with $I = i_1 \ldots i_m$ and $J = j_1 \ldots j_n$ by checking whether $i_s \ci_{\om{E}} j_t | i_1 \ldots i_{s-1} j_1 \ldots j_{t-1} K$ for all $1 \leq s \leq m$ and $1 \leq t \leq n$. Thanks to Lemma \ref{lem:grid}, this solution can also be reformulated as checking whether the DAG representation of ${\om{E}}$ contains a suitable grid. Likewise, if ${\m{M}}$ satisfies CI0-2, then we can check whether $I \ci_{\m{M}} J | K$ by checking whether $i \ci_{\om{E}} j | (I \setminus i) (J \setminus j) K$ for all $i \in I$ and $j \in J$. Finally, if ${\m{M}}$ satisfies CI0-3, then we can check whether $I \ci_{\m{M}} J | K$ by checking whether $i \ci_{\om{E}} j | K$ for all $i \in I$ and $j \in J$. Note that in the last two cases, we only need to check elementary triplets with conditioning sets of a specific length or form.

\subsection{Minimal Independence Map}

Given a set of triplets $\m{M}$ that satisfies CI0-1, a minimal independence map (MIM) of $\m{M}$ relative to an ordering $\sigma$ of the elements in $V$ can be built by setting $Pa_G(\sigma(s))$ for all $1 \leq s \leq |V|$ to a minimal subset of $\sigma(1) \ldots \sigma(s-1)$ such that $\sigma(s) \ci_{\m{M}} \sigma(1) \ldots \sigma(s-1) \setminus Pa_G(\sigma(s)) | Pa_G(\sigma(s))$ \cite[Theorem 9]{Pearl:1988:PRI:534975}. A MIM can be built with the help of the DAG representation of ${\om{E}}$ as follows. First, let us define the function $AllPa(i,X)$ with $i \in V$ and $X \subseteq V \setminus i$ as follows. The function returns all the sets $Y \subseteq X$ that qualify as parents of $i$, i.e. $i \ci_{\m{M}} X \setminus Y | Y$.

\begin{table}[H]
\begin{tabular}{rl}
& $\underline{AllPa(i, X)}$\\
\\
1 & $aux= \emptyset$\\
2 & for each longest grid in the DAG representation of ${\om{E}}$ that is of the form\\
& $i \ci_{\om{E}} j_1 | X \setminus j_1 \ldots j_n \ra i \ci_{\om{E}} j_2 | X \setminus j_2 \ldots j_n \ra \ldots \ra i \ci_{\om{E}} j_n | X \setminus j_n$ or\\
& $j_1 \ci_{\om{E}} i | X \setminus j_1 \ldots j_n \dashedrightarrow j_2 \ci_{\om{E}} i | X \setminus j_2 \ldots j_n \dashedrightarrow \ldots \dashedrightarrow j_n \ci_{\om{E}} i | X \setminus j_n$ with\\
& $j_1 \ldots j_n \subseteq X$ do\\
3 & \hspace{0.3cm} $aux = aux \cup \{ X \setminus j_1 \ldots j_n \}$\\
4 & if $aux \neq \emptyset$ then\\
5 & \hspace{0.3cm} return $aux$\\
6 & else\\
7 & \hspace{0.3cm} return $X$\\
\end{tabular}
\end{table}

Note that for every set of nodes $Y \in AllPa(i,X)$, we have that $i \ci_{\m{M}} X \setminus Y | Y$ by Lemma \ref{lem:grid}. Therefore, building a MIM of ${\m{M}}$ relative to $\sigma$ can now be reformulated as setting $Pa_G(\sigma(s))=Y$ with $Y \in AllPa(\sigma(s),\sigma(1) \ldots \sigma(s-1))$ for all $1 \leq s \leq |V|$.

Since ${\m{M}}$ satisfies CI0-1, we can check whether the MIM built above is a perfect map (PM) of ${\m{M}}$ by checking whether ${\m{M}}$ coincides with the CI0-1 closure of $\{ \sigma(s) \ci \sigma(1) \ldots \sigma(s-1) \setminus Pa_G(\sigma(s)) | Pa_G(\sigma(s)) : 1 \leq s \leq |V| \}$ \cite[Corollary 7]{Pearl:1988:PRI:534975}. This result suggests the following method for checking whether ${\m{M}}$ has a PM: ${\m{M}}$ has a PM if and only if the call $PM(\emptyset, \emptyset)$ to the following function returns true.

\begin{table}[H]
\begin{tabular}{rl}
& $\underline{PM(Visited, Marked)}$\\
\\
1 & if $Visited = V$ then\\
2 & \hspace{0.3cm} if ${\om{E}}$ coincides with the ci0-1 closure of $Marked$\\
3 & \hspace{0.3cm} then return true and stop\\
4 & else\\
5 & \hspace{0.3cm} for each node $i \in V \setminus Visited$ do\\
6 & \hspace{0.6cm} for each $Pa \in AllPa(i,Visited)$ do\\
7 & \hspace{0.9cm} $PM(Visited \cup \{ i \}, Marked \cup e(\{ i \ci_{\m{M}} Visited \setminus Pa | Pa,$\\
& \hspace{6cm} $Visited \setminus Pa \ci_{\m{M}} i | Pa \}))$\\
\end{tabular}
\end{table}

Note that the function above is recursive. Lines 2-3 conform the trivial case, whereas lines 5-7 conform the recursive case. Lines 5 makes the function consider every ordering of the nodes in $V$ before stopping. For a particular ordering, line 6 considers all the subsets $Pa$ of the predecessors of the node $i$ in the ordering (i.e. $Visited$) that qualify as parents of $i$, i.e. $i \ci_{\m{M}} Visited \setminus Pa | Pa$. Such subsets are exactly the output of the function $AllPa(i,Visited)$. For each such subset $Pa$, line 7 marks $i$ as visited (i.e. processed), marks the elementary triplets used in the derivation of $i \ci_{\m{M}} Visited \setminus Pa | Pa$ and, then, it launches the search for the parents of the next node in the ordering by recursively calling the function. Note that the parameters are passed by value in the recursive call. Finally, note the need to compute the ci0-1 closure of $Marked$ in line 2. The elementary triplets in $Marked$ represent the triplets corresponding to the grids identified by the calls to the function $AllPa$ in line 6. However, it is the ci0-1 closure of the elementary triplets in $Marked$ that represents the CI0-1 closure of the triplets corresponding to the grids identified by the calls to the function $AllPa$, by Lemma \ref{lem:G2Pb}.

Finally, it is worth mentioning that if ${\m{M}}$ satisfies CI0-2, then there exist methods to build a MIM and check the existence of a PM that make use of the dominant triplets of ${\m{M}}$ \cite{Baioletti20112}.

\subsection{Inclusion}\label{sec:inclusion}

Let ${\m{M}}$ and ${\m{M}}'$ denote two sets of triplets satisfying CI0-1/CI0-2/CI0-3. We can check whether ${\m{M}} \subseteq {\m{M}}'$ by checking whether ${\om{E}} \subseteq {\om{E}}'$. In the view of Lemma \ref{lem:G2P}, this result is an immediate consequence of Lemma 2.2 by \cite{Studeny2005}. If the DAG representations of ${\om{E}}$ and ${\om{E}}'$ are available, then we can answer the inclusion question by checking whether the former is a subgraph of the latter.

\subsection{Intersection}

Let ${\m{M}}$ and ${\m{M}}'$ denote two sets of triplets satisfying CI0-1/CI0-2/CI0-3. Note that ${\m{M}} \cap {\m{M}}'$ satisfies CI0-1/CI0-2/CI0-3. Likewise, ${\om{E}} \cap {\om{E}}'$ satisfies ci0-1/ci0-2/ci0-3. We can represent ${\m{M}} \cap {\m{M}}'$ by ${\om{E}} \cap {\om{E}}'$. To see it, note that $I \ci_{{\m{M}} \cap {\m{M}}'} J | K$ if and only if $i \ci_{\om{E}} j | M$ and $i \ci_{{\om{E}}'} j | M$ for all $i \in I$, $j \in J$, and $K \subseteq M \subseteq (I \setminus i) (J \setminus j) K$. If the DAG representations of ${\om{E}}$ and ${\om{E}}'$ are available, then we can represent ${\m{M}} \cap {\m{M}}'$ by the subgraph of either of them induced by the nodes that are in both of them.

Typically, a single expert (or learning algorithm) is consulted to provide an independence model of the domain at hand. Hence the risk that the independence model may not be accurate, e.g. if the expert has some bias or overlooks some details. One way to minimize this risk consists in obtaining multiple independence models of the domain from multiple experts and, then, combining them into a single consensus independence model. In particular, we define the consensus independence model as the model that contains all and only the conditional independences on which all the given models agree, i.e. the intersection of the given models. Therefore, the paragraph above provides us with an efficient way to obtain the consensus independence model. When the given models are represented by their dominant triplets, an operator to obtain the consensus independence model exists for the case where the given models satisfy CI0-2 \cite{BaiolettiPV13}. The problem is harder if we only consider independence models induced by DAGs: There may be several non-equivalent consensus models, and finding one of them is NP-hard \cite[Theorems 1 and 2]{Penna2011}. So, one has to resort to heuristics.

\subsection{Context-specific Independences}\label{sec:csi}

Note that in a context-specific independence the context always appears in the conditioning set of the triplet. Thus, the results presented so far in this paper hold for independence models containing context-specific independences. We just need to rephrase the properties CI0-3 and ci0-3 to accommodate context-specific independences. We elaborate more on this in Section \ref{sec:csirevisited}.

\subsection{Union}

Let ${\m{M}}$ and ${\m{M}}'$ denote two sets of triplets satisfying CI0-1/CI0-2/CI0-3. Note that ${\m{M}} \cup {\m{M}}'$ may not satisfy CI0-1/CI0-2/CI0-3. For instance, let ${\m{M}} = \{ x \ci y | z, y \ci x | z \}$ and ${\m{M}}' = \{x \ci z | \emptyset, z \ci x | \emptyset \}$. Then, $x \ci y | z$ and $x \ci z | \emptyset$ are in ${\m{M}} \cup {\m{M}}'$ but $x \ci y z | \emptyset$ is not. A naive solution to this problem is simply introducing an auxiliary random variable $aux$ with domain $\{0,1\}$, and adding the context $aux=0$ (respectively $aux=1$) to the conditioning set of every triplet in ${\m{M}}$ (respectively ${\m{M}}'$). In the previous example, ${\m{M}} = \{ x \ci y | z, aux=0, y \ci x | z, aux=0 \}$ and ${\m{M}}' = \{x \ci z | aux=1, z \ci x | aux=1 \}$. Now, we can represent ${\m{M}} \cup {\m{M}}'$ by first adding the context $aux=0$ (respectively $aux=1$) to the conditioning set of every elementary triplet in ${\om{E}}$ (respectively ${\om{E}}'$) and, then, taking ${\om{E}} \cup {\om{E}}'$. This solution has advantages and disadvantages. The main advantage is that we represent ${\m{M}} \cup {\m{M}}'$ exactly. One of the disadvantages is that the same elementary triplet may appear twice in the representation, i.e. with different contexts in the conditioning set. Another disadvantage is that we need to modify slightly the procedures described above for building MIMs, and checking membership and inclusion. We believe that the advantage outweighs the disadvantages.

If the solution above is not satisfactory or it is deemed to lack a deeper justification, then we have two options: Representing a minimal superset or a maximal subset of ${\m{M}} \cup {\m{M}}'$ satisfying CI0-1/CI0-2/CI0-3. Note that the minimal superset of ${\m{M}} \cup {\m{M}}'$ satisfying CI0-1/CI0-2/CI0-3 is unique because, otherwise, the intersection of any two such supersets is a superset of ${\m{M}} \cup {\m{M}}'$ that satisfies CI0-1/CI0-2/CI0-3, which contradicts the minimality of the original supersets. On the other hand, the maximal subset of ${\m{M}} \cup {\m{M}}'$ satisfying CI0-1/CI0-2/CI0-3 is not unique. For instance, let ${\m{M}} = \{ x \ci y | z, y \ci x | z \}$ and ${\m{M}}' = \{ x \ci z | \emptyset, z \ci x | \emptyset \}$. Then, ${\m{M}} \cup {\m{M}}'$ does not satisfy CI1, e.g. $x \ci y | z$ and $x \ci z | \emptyset$ are in ${\m{M}} \cup {\m{M}}'$ but $x \ci y z | \emptyset$ is not. Moreover, both ${\m{M}}$ and ${\m{M}}'$ are maximal subsets of ${\m{M}} \cup {\m{M}}'$ that satisfy CI0-1/CI0-2/CI0-3, i.e. ${\m{M}}$ and ${\m{M}}'$ satisfy CI0-1/CI0-2/CI0-3 but, as shown, adding any triplet in ${\m{M}}'$ to ${\m{M}}$ or vice versa results in a set of triplets that does not satisfy CI1.

Coming back to the two options mentioned above, we can represent the minimal superset of ${\m{M}} \cup {\m{M}}'$ satisfying CI0-1/CI0-2/CI0-3 by the ci0-1/ci0-2/ci0-3 closure of ${\om{E}} \cup {\om{E}}'$. Clearly, this representation represents a superset of ${\m{M}} \cup {\m{M}}'$. Moreover, the superset satisfies CI0-1/CI0-2/CI0-3 by Lemma \ref{lem:G2P}. Minimality follows from the fact that removing any elementary triplet from the closure of ${\om{E}} \cup {\om{E}}'$ so that the result is still closed under ci0-1/ci0-2/ci0-3 implies removing some elementary triplet in ${\om{E}} \cup {\om{E}}'$, which implies not representing some triplet in ${\m{M}} \cup {\m{M}}'$ by Lemma \ref{lem:G2P}. Note that the DAG representation of ${\m{M}} \cup {\m{M}}'$ is not the union of the DAG representations of ${\om{E}}$ and ${\om{E}}'$, because we first have to close ${\om{E}} \cup {\om{E}}'$ under ci0-1/ci0-2/ci0-3. We can represent a maximal subset of ${\m{M}} \cup {\m{M}}'$ satisfying CI0-1/CI0-2/CI0-3 by a maximal subset $U$ of ${\om{E}} \cup {\om{E}}'$ that is closed under ci0-1/ci0-2/ci0-3 and such that every triplet represented by $U$ is in ${\m{M}} \cup {\m{M}}'$. Recall that we can check the latter as shown in Section \ref{sec:membership}. In fact, we do not need to check it for every triplet but only for the dominant triplets. Recall that these can be obtained from $U$ as shown in Lemma \ref{lem:dominant}. It should be noted that both options discussed in this paragraph can be computationally demanding since, as mentioned before, closing a set of elementary triplets under ci0-1/ci0-2/ci0-3 is demanding in the worst case scenario and the complexity of the average case is unknown.

Finally, it is worth mentioning that if ${\m{M}}$ and ${\m{M}}'$ satisfy CI0-2, then there exist methods to obtain from the dominant triplets of ${\m{M}}$ and ${\m{M}}'$ both the minimal superset and a maximal superset of ${\m{M}} \cup {\m{M}}'$ satisfying CI0-2 \cite{BaiolettiPV13}.

\subsection{Causal Reasoning}\label{sec:reasoning}

Causal reasoning comprises the study of cause and effect relationships, and the conditions under which they can be elucidated from observed quantities. For instance, we may be interested in quantifying the effect on a patient's health ($H$) of a prescribed treatment ($T=t$). In general, this effect does not coincide with the conditional probability distribution $p(H|t)$: The former accounts for the causal paths from $T$ to $H$, whereas the latter accounts for all the paths, which may include non-causal ones, e.g. if $T$ and $H$ have a common cause, say the socioeconomic status of the patient. The causal effect is typically denoted as $p(H|do(t))$ to indicate that $t$ is not an observation but an intervention, i.e. $T$ has been set to value $t$ independently of its causes and, thus, the non-causal paths from $T$ to $H$ should be ignored. As already seen in this toy example, it is rather natural to think of the causal relationships under study as directed edges in a graph. The graph may also contain bidirected edges to represent correlations due to unobserved common causes, also called confounders.

Since predicting the consequences of decisions or actions is necessary in many disciplines, it is not surprising that research on causal reasoning has a long tradition. Specifically, causal reasoning can be traced back to the work by Wright \cite{Wright1921}, where path analysis was introduced for the first time. Path analysis relies on the just described graphical representation of the causal model at hand. Moreover, the common effect of a set of causes is assumed to be a linear combination of the causes. Wright showed how to use the graph to perform causal reasoning. Apparently, a large part of the research community did not see much merit in Wright's graphical approach and preferred to work with the underlying system of linear equations. Later, the linearity constraint was lifted giving rise to a non-parametric structural equation model \cite{Haavelmo1943}. Another non-graphical approach to causal reasoning was developed by Neyman and Rubin, the so-called potential outcome model \cite{Neyman1923,Rubin1974}. This model has been shown to be subsumed by the non-parametric structural equation model \cite[Section 7.4.4]{Pearl2009}.

Wright's work was rediscovered in the 1980s by Pearl and other researchers. Their advances in the field are best reported in \cite{Pearl2009}. Although Pearl's work builds on path analysis, it differs from it in two significant aspects. First, the linearity assumption is dropped so that the causal models considered are non-parametric. Second, Pearl and co-workers succeeded in giving a sound and complete characterization of the conditions for a causal effect to be identifiable, i.e. computable from observed quantities. The characterization is graphical, meaning that it is expressed in terms of the graphical representation of the causal model of the domain under study.

As mentioned, the graphical approach to causal reasoning has produced very satisfactory results. However, it has two main disadvantages. First, it does not apply to domains whose causal model cannot be represented by a graph. Many domains arguably fall in this category. For instance, those domains that contain correlations that cannot be attributed to confounding, e.g. correlations due to selection bias, physical laws devoid of causal meaning, or feedback loops. Second, the graphical approach to causal reasoning makes an implicit modularity or invariance assumption, which does not always hold: The causes of a random variable do not change when we intervene on another random variable. See \cite{Dawid2010a,Dawid2010b} for further details on these problems. See also \cite{Dawid2015} for the outline of a decision theoretic approach to causal reasoning that overcomes the problems just described. Note that even Pearl acknowledges the need to develop non-graphical approaches to causal reasoning \cite[p. 10]{GallesandPearl1997}. 

Despite the many interesting results reported over the years for the non-graphical approaches to causal reasoning mentioned above, they lag behind the graphical approach in terms of meaningfulness and insightfulness. Inspired by the decision theoretic approach to causality in \cite{Dawid2015}, we present in this section our contribution to solve this problem. Specifically, we present a series of sufficient conditions for causal effect identification from the independence model of the domain at hand. We propose to represent the independence model by its elementary triplets, and so take advantage of the results reported in the previous sections of this work.

As in \cite[Section 3.2.2]{Pearl2009}, we start by adding an exogenous random variable $F_j$ for each $j \in V$, such that $F_j$ takes values in $\{interventional, observational\}$. These values represent whether an intervention has been performed on $j$ or not. We use $I_j$ and $O_j$ to denote respectively that $F_j = interventional$ and $F_j = observational$. The random variables in $F_V V$ are governed by a probability distribution $p(F_V V)$. We assume to have access to $p(V | O_V)$ only, e.g. through a sample of the observational regime. We aim to identify conditions that allow computing an expression of the form $p(Y | I_X O_{V \setminus X} X W)$ from $p(V | O_V)$, with $X$, $Y$ and $W$ disjoint subsets of $V$. These conditions will be expressed in terms of independences over subsets of $F_V V$. For instance, if $Y \ci_p F_X | O_{V \setminus X} X W$ then $p(Y | I_X O_{V \setminus X} X W) = p(Y | O_V X W)$. To check whether the conditions hold, we need therefore to have access to the independence model ${\m{M}}$ induced by $p(F_V V)$. However, recall that we only have access to $p(V | O_V)$. Therefore, we assume that the user will be able to provide us with ${\m{M}}$. We believe that the most convenient (albeit tedious) way of doing so is by providing us with $\om{E}$. As shown before, $\om{E}$ identifies ${\m{M}}$ unambiguously and is considerably more concise (Lemmas \ref{lem:G2P} and \ref{lem:G2Pb}), and it allows checking relatively efficiently whether an independence is in ${\m{M}}$ (Section \ref{sec:membership}). Moreover, it only requires specifying pairwise independences, which simplifies the task of the user. Of course, the user will make use of $p(V | O_V)$ to decide on those independences of the form $i \ci j | O_V Z$ with $i, j \in V$ and $Z \subseteq V \setminus i j$.

It should be mentioned that most of the conditional independences in this section will be context-specific, as they will include $O_V$ or $F_V$ in the conditioning set. Moreover, we assume that $p(V | O_V)$ is strictly positive. This prevents an intervention from setting a random variable to a value with zero probability under the observational regime, which would make our quest impossible. For the sake of readability, we assume that the random variables in $V$ are in their observational regimes unless otherwise stated. Thus, hereinafter $\tp(Y | I_X X W)$ is a shortcut for $p(Y | I_X O_{V \setminus X} X W)$, $Y \tci_{\m{M}} F_X | X W$ is a shortcut for $Y \ci_{\m{M}} F_X | O_{V \setminus X} X W$, and so on. The rest of this section shows how to perform causal reasoning with independence models by rephrasing some of the main results in \cite[Chapter 4]{Pearl2009} in terms of conditional independences alone, i.e. no causal graphs are involved.

\subsubsection{{\em do}-Calculus, and Back-Door and Front-Door Criteria}

We start by rephrasing Pearl's {\em do}-calculus \cite[Theorem 3.4.1]{Pearl2009}.

\begin{theorem}
Let $X$, $Y$, $W$ and $Z$ denote four disjoint subsets of $V$. Then
\begin{itemize}
\item Rule 1 (insertion/deletion of observations).

If $Y \tci_{\m{M}} X | I_Z W Z$ then $\tp(Y| I_Z X W Z) = \tp(Y| I_Z W Z)$.

\item Rule 2 (intervention/observation exchange). 

If $Y \tci_{\m{M}} F_X | I_Z X W Z$ then $\tp(Y| I_X I_Z X W Z) = \tp(Y| I_Z X W Z)$.

\item Rule 3 (insertion/deletion of interventions). 

If $Y \tci_{\m{M}} X | I_X I_Z W Z$ and $Y \tci_{\m{M}} F_X | I_Z W Z$, then $\tp(Y| I_X I_Z X W Z) = \tp(Y| I_Z W Z)$.
\end{itemize}
\end{theorem}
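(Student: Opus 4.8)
The plan is to prove the three rules by a single mechanism: expand the shorthand notation into statements about the full distribution $p(F_V V)$, and then read off each conclusion as a direct instance of the probabilistic definition of conditional independence. First I would fix the bookkeeping. Every $\tp(Y\mid\cdots)$ abbreviates a conditional of $p(F_V V)$ in which $F_j$ is fixed to $O_j$ for every $j$ not explicitly placed in the interventional regime, and every $Y \tci_{\m{M}} \cdot \mid \cdots$ abbreviates the corresponding conditional independence in which the tested $F$-variable is the only one left free (so, e.g., $Y \tci_{\m{M}} F_X \mid I_Z X W Z$ means $Y \ci_{\m{M}} F_X \mid I_Z O_{V \setminus (XZ)} X W Z$). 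Because $p(V\mid O_V)$ is assumed strictly positive, every conditioning event has positive probability, so each triplet $A \ci_p B \mid C$ is simply the equality $p(A\mid BC)=p(A\mid C)$, and this is all the machinery I will need.

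For Rule 1, I would unpack the hypothesis $Y \tci_{\m{M}} X \mid I_Z W Z$ to $Y \ci_{\m{M}} X \mid I_Z O_{V\setminus Z} W Z$, which by definition says $p(Y\mid X, I_Z O_{V\setminus Z} W Z) = p(Y\mid I_Z O_{V\setminus Z} W Z)$; rewriting both sides in shorthand gives $\tp(Y\mid I_Z X W Z)=\tp(Y\mid I_Z W Z)$. For Rule 2, I would unpack $Y \tci_{\m{M}} F_X \mid I_Z X W Z$ and observe that independence from $F_X$ means the relevant conditional of $Y$ is invariant to the value of $F_X$; evaluating it at $F_X=I_X$ and at $F_X=O_X$ and translating back yields $\tp(Y\mid I_X I_Z X W Z)=\tp(Y\mid I_Z X W Z)$, since the $\tp$ on the right carries the silent $O_X$.

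Rule 3 I would obtain by chaining the two previous arguments. The first hypothesis $Y \tci_{\m{M}} X \mid I_X I_Z W Z$ is a Rule-1-type independence that lets me delete $X$ from the conditioning set while $F_X$ stays interventional, giving $\tp(Y\mid I_X I_Z X W Z) = p(Y\mid I_X I_Z O_{V\setminus(XZ)} W Z)$. The second hypothesis $Y \tci_{\m{M}} F_X \mid I_Z W Z$ is a Rule-2-type independence that lets me switch $F_X$ from $I_X$ to $O_X$ in that last conditional, giving $p(Y\mid I_X I_Z O_{V\setminus(XZ)} W Z) = \tp(Y\mid I_Z W Z)$ (here $O_X O_{V\setminus(XZ)}$ recombines into $O_{V\setminus Z}$). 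Composing the two equalities yields the claim $\tp(Y\mid I_X I_Z X W Z)=\tp(Y\mid I_Z W Z)$.

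I expect the only genuine obstacle to be the notational bookkeeping rather than any mathematical difficulty: one must keep track that $O_{V\setminus Z}$ silently absorbs $O_X$ in the conclusions of Rules 2 and 3, that the free $F_X$ appearing in a $\tci$-hypothesis is exactly the variable excluded from the observational block, and that strict positivity is what makes every conditional legitimate so the independence definitions genuinely apply. Once the abbreviations are expanded consistently, each rule collapses to a one- or two-line application of the definition of conditional independence, with no appeal to the semigraphoid axioms or to the elementary-triplet representation developed earlier.
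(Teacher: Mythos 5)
Your proof is correct and takes essentially the same approach as the paper: Rules 1 and 2 are read off directly from the definition of conditional independence once the regime shorthand is expanded, and Rule 3 is obtained by chaining the two hypotheses exactly as in the paper's two-step equality $\tp(Y| I_X I_Z X W Z) = \tp(Y| I_X I_Z W Z) = \tp(Y| I_Z W Z)$. The paper merely states this tersely (``Rules 1 and 2 are immediate''), and your careful bookkeeping of which $F_j$ are silently set to $O_j$ is precisely what fills in that claim.
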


\begin{proof}
Rules 1 and 2 are immediate. To prove rule 3, note that
\[
\tp(Y| I_X I_Z X W Z) = \tp(Y| I_X I_Z W Z) = \tp(Y| I_Z W Z)
\]
by deploying the conditional independences given. 
\end{proof}

Recall that checking whether the antecedents of the rules above hold can be done as shown in Section \ref{sec:membership}, since we assume to have access to the elementary representation of $\m{M}$. The antecedent of rule 1 should be read as, given that $Z$ operates under its interventional regime and $V \setminus Z$ operates under its observational regime, $X$ is conditionally independent of $Y$ given $W$. The antecedent of rule 2 should be read as, given that $Z$ operates under its interventional regime and $V \setminus Z$ operates under its observational regime, the conditional probability distribution of $Y$ given $X W Z$ is the same in the observational and interventional regimes of $X$ and, thus, it can be transferred across regimes. The antecedent of rule 3 should be read similarly.

Clearly, if repeated application of rules 1-3 reduces a causal effect to an expression involving only observed quantities, then it is identifiable. The following theorem shows that finding the sequence of rules 1-3 to apply can be systematized in some cases. The theorem likens \cite[Theorems 3.3.2, 3.3.4 and 4.3.1, and Section 4.3.3]{Pearl2009}.\footnote{The best way to appreciate the likeness between our and Pearl's theorems is by first adding the edge $F_j \ra j$ to the causal graphs in Pearl's theorems for all $j \in V$ and, then, using {\em d}-separation to compare the conditions in our theorem and the conditional independences used in the proofs of Pearl's theorems. We omit the details because our results do not build on Pearl's, i.e. they are self-contained.}

\begin{theorem}\label{the:sufficiency}
Let $X$, $Y$ and $W$ denote three disjoint subsets of $V$. Then, $\tp(Y | I_X X W)$ is identifiable if one of the following cases applies:
\begin{itemize}
\item Case 1 (back-door criterion). If there exists a set $Z \subseteq V \setminus X Y W$ such that the following conditions hold conjunctively:
\begin{itemize}
\item Condition 1.1. $Y \tci_{\m{M}} F_X | X W Z$

\item Condition 1.2. $Z \tci_{\m{M}} X | I_X W$ and $Z \tci_{\m{M}} F_X | W$
\end{itemize}
then $\tp(Y | I_X X W) = \sum_Z \tp(Y | X W Z) \tp(Z | W)$.

\item Case 2 (front-door criterion). If there exists a set $Z \subseteq V \setminus X Y W$ such that the following conditions hold conjunctively:
\begin{itemize}
\item Condition 2.1. $Z \tci_{\m{M}} F_X | X W$

\item Condition 2.2. $Y \tci_{\m{M}} F_Z | X W Z$

\item Condition 2.3. $X \tci_{\m{M}} Z | I_Z W$ and $X \tci_{\m{M}} F_Z | W$

\item Condition 2.4. $Y \tci_{\m{M}} F_Z | I_X X W Z$

\item Condition 2.5. $Y \tci_{\m{M}} X | I_X I_Z W Z$ and $Y \tci_{\m{M}} F_X | I_Z W Z$
\end{itemize}
then $\tp(Y | I_X X W) = \sum_Z \tp(Z | X W) \sum_X \tp(Y | X W Z) \tp(X | W)$.

\item Case 3. If there exists a set $Z \subseteq V \setminus X Y W$ such that the following conditions hold conjunctively:
\begin{itemize}
\item Condition 3.1. $\tp(Z | I_X X W)$ is identifiable 

\item Condition 3.2. $Y \tci_{\m{M}} F_X | X W Z$
\end{itemize}
then $\tp(Y | I_X X W) = \sum_Z \tp(Y | X W Z) \tp(Z | I_X X W)$.

\item Case 4. If there exists a set $Z \subseteq V \setminus X Y W$ such that the following conditions hold conjunctively:
\begin{itemize}
\item Condition 4.1. $\tp(Y | I_X X W Z)$ is identifiable 

\item Condition 4.2. $Z \tci_{\m{M}} X | I_X W$ and $Z \tci_{\m{M}} F_X | W$
\end{itemize}
then $\tp(Y | I_X X W) = \sum_Z \tp(Y | I_X X W Z) \tp(Z | W)$.
\end{itemize}
\end{theorem}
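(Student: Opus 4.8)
The plan is to obtain all four cases from a single template. Fixing a set $Z \subseteq V \setminus X Y W$ as in the hypotheses, the law of total probability (licensed by the assumed strict positivity of $p(V \mid O_V)$) gives
\[
\tp(Y \mid I_X X W) = \sum_Z \tp(Y \mid I_X X W Z)\, \tp(Z \mid I_X X W).
\]
It then suffices to rewrite the two factors $\tp(Y \mid I_X X W Z)$ and $\tp(Z \mid I_X X W)$ using the three rules of Theorem 1, matching the generic variables of each rule to the concrete sets at hand.

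First I would dispose of the factor $\tp(Z \mid I_X X W)$. Applying Rule 3 with the rule's outcome set taken to be $Z$, its intervention-and-deletion set taken to be $X$, its context taken to be $W$, and its auxiliary intervention set empty, Condition 1.2 (equivalently 4.2), namely $Z \tci_{\m{M}} X \mid I_X W$ and $Z \tci_{\m{M}} F_X \mid W$, yields $\tp(Z \mid I_X X W) = \tp(Z \mid W)$. For the other factor, applying Rule 2 with the rule's context slot taken to be $W Z$ (folding the adjustment variable $Z$ into the context) and empty auxiliary intervention, Condition 1.1 (equivalently 3.2), namely $Y \tci_{\m{M}} F_X \mid X W Z$, yields $\tp(Y \mid I_X X W Z) = \tp(Y \mid X W Z)$. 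Substituting both simplifications into the template gives Case 1; substituting only the first gives Case 4, the surviving factor being identifiable by Condition 4.1; and substituting only the second gives Case 3, the surviving factor being identifiable by Condition 3.1.

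Case 2 (front door) is the involved one, and I would build it on top of Case 1. Starting from the template, Rule 2 with Condition 2.1 simplifies $\tp(Z \mid I_X X W) = \tp(Z \mid X W)$. For the factor $\tp(Y \mid I_X X W Z)$ I would first insert the intervention $I_Z$ by reading Rule 2 from right to left with Condition 2.4, obtaining $\tp(Y \mid I_X X W Z) = \tp(Y \mid I_X I_Z X W Z)$, and then delete both $I_X$ and the observation of $X$ by Rule 3 with Condition 2.5, obtaining $\tp(Y \mid I_X I_Z X W Z) = \tp(Y \mid I_Z W Z)$. The key observation is that $\tp(Y \mid I_Z W Z)$ is exactly a back-door adjustment for the effect of $Z$ on $Y$: relabelling Case 1 with treatment $Z$, adjustment set $X$, outcome $Y$ and context $W$, Conditions 2.2 and 2.3 become precisely Conditions 1.1 and 1.2 of that relabelled instance, so Case 1 delivers $\tp(Y \mid I_Z W Z) = \sum_X \tp(Y \mid X W Z)\, \tp(X \mid W)$. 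Substituting the two simplified factors into the template then reproduces the stated front-door formula.

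The main obstacle is bookkeeping rather than mathematical depth: every rule application requires carefully aligning the generic sets $X, Y, W, Z$ of Theorem 1 with the concrete ones, tracking which symbols occur simultaneously as interventions $I_\bullet$ and as observations, and moving the adjustment variable $Z$ into or out of the context slot as needed. The one genuinely structural step is recognising the intermediate quantity $\tp(Y \mid I_Z W Z)$ in the front-door derivation as an instance of the already-established back-door case; checking that Conditions 2.2 and 2.3 coincide with the relabelled Conditions 1.1 and 1.2 is what makes the two criteria interlock. Throughout, strict positivity of $p(V \mid O_V)$ is what legitimises both the total-probability decomposition and the passage from each conditional independence to the corresponding equality of conditional distributions.
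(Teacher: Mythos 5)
Your proof is correct and follows essentially the same route as the paper's: the same total-probability decomposition $\tp(Y | I_X X W) = \sum_Z \tp(Y | I_X X W Z)\, \tp(Z | I_X X W)$ in every case, the same applications of rules 2 and 3 to the two factors (conditions 1.1/3.2 and 1.2/4.2 respectively), and for the front-door case the same key move of reducing $\tp(Y | I_Z W Z)$ to a relabelled instance of the back-door case via conditions 2.2 and 2.3. The only cosmetic difference is that you invoke rule 2 directly for condition 2.1, where the paper phrases this as case 1 with $Z = \emptyset$; these are identical.
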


\begin{proof}
To prove case 1, note that
\[
\tp(Y | I_X X W) = \sum_Z \tp(Y | I_X X W Z) \tp(Z | I_X X W) = \sum_Z \tp(Y | X W Z) \tp(Z | I_X X W)
\]
\[
= \sum_Z \tp(Y | X W Z) \tp(Z | W)
\]
where the second equality is due to rule 2 and condition 1.1, and the third due to rule 3 and condition 1.2.

To prove case 2, note that condition 2.1 enables us to apply case 1 replacing $X$, $Y$, $W$ and $Z$ with $X$, $Z$, $W$ and $\emptyset$. Then, $\tp(Z | I_X X W) = \tp(Z | X W)$. Likewise, conditions 2.2 and 2.3 enable us to apply case 1 replacing $X$, $Y$, $W$ and $Z$ with $Z$, $Y$, $W$ and $X$. Then, $\tp(Y | I_Z W Z) = \sum_X \tp(Y | X W Z) \tp(X | W)$. Finally, note that
\[
\tp(Y | I_X X W) = \sum_Z \tp(Y | I_X X W Z) \tp(Z | I_X X W) = \sum_Z \tp(Y | I_X I_Z X W Z) \tp(Z | I_X X W)
\]
\[
= \sum_Z \tp(Y | I_Z W Z) \tp(Z | I_X X W)
\]
where the second equality is due to rule 2 and condition 2.4, and the third due to rule 3 and condition 2.5. Plugging the intermediary results proven before into the last equation gives the desired result.

To prove case 3, note that
\[
\tp(Y | I_X X W) = \sum_Z \tp(Y | I_X X W Z) \tp(Z | I_X X W) = \sum_Z \tp(Y | X W Z) \tp(Z | I_X X W)
\]
where the second equality is due to rule 2 and condition 3.2.

To prove case 4, note that
\[
\tp(Y | I_X X W) = \sum_Z \tp(Y | I_X X W Z) \tp(Z | I_X X W) = \sum_Z \tp(Y | I_X X W Z) \tp(Z | W)
\]
where the second equality is due to rule 3 and condition 4.2.
\end{proof}

For instance, consider the causal graph (a) in Figure \ref{fig:example2} \cite[Figure 3.4]{Pearl2009}. Then, $\tp(y | I_x x z_3)$ can be identified by case 1 with $X=x$, $Y=y$, $W=z_3$ and $Z = z_4$ and, thus, $\tp(y | I_x x)$ can be identified by case 4 with $X=x$, $Y=y$, $W=\emptyset$ and $Z=z_3$. To see that each triplet in the conditions in cases 1 and 4 holds, we can add the edge $F_j \ra j$ to the graph for all $j \in V$ and, then, apply {\em d}-separation in the causal graph after having performed the interventions in the conditioning set of the triplet, i.e. after having removed any edge with an arrowhead into any node in the conditioning set. See \cite[3.2.3]{Pearl2009} for further details. Given the causal graph (b) in Figure \ref{fig:example2} \cite[Figure 4.1 (b)]{Pearl2009}, $\tp(z_2 | I_x x)$ can be identified by case 2 with $X=x$, $Y=z_2$, $W=\emptyset$ and $Z=z_1$ and, thus, $\tp(y | I_x x)$ can be identified by case 3 with $X=x$, $Y=y$, $W=\emptyset$ and $Z=z_2$. Note that we do not need to know the causal graphs nor their existence to identify the causal effects. It suffices to know the conditional independences in the conditions of the cases in the theorem above. Recall again that checking these can be done as shown in Section \ref{sec:membership}. The theorem above can be seen as a recursive procedure for causal effect identification: Cases 1 and 2 are the base cases, and cases 3 and 4 are the recursive ones. In applying this procedure, efficiency may be an issue, though: Finding $Z$ seems to require an exhaustive search.

\begin{figure}
\centering
\begin{tabular}{|c|c|c|}
\hline
(a)&(b)&(c)\\
\begin{tikzpicture}[inner sep=1mm]
\node at (0,0) (X) {$x$};
\node at (2,0) (Y) {$y$};
\node at (1,0) (Z6) {$z_6$};
\node at (1,1) (Z4) {$z_4$};
\node at (0,1) (Z3) {$z_3$};
\node at (2,1) (Z5) {$z_5$};
\node at (0,2) (Z1) {$z_1$};
\node at (2,2) (Z2) {$z_2$};
\path[->] (Z4) edge (X);
\path[->] (Z4) edge (Y);
\path[->] (X) edge (Z6);
\path[->] (Z6) edge (Y);
\path[->] (Z3) edge (X);
\path[->] (Z5) edge (Y);
\path[->] (Z1) edge (Z3);
\path[->] (Z2) edge (Z5);
\path[->] (Z1) edge (Z4);
\path[->] (Z2) edge (Z4);
\end{tikzpicture}
&
\begin{tikzpicture}[inner sep=1mm]
\node at (0,0) (A) {$x$};
\node at (2,-2) (B) {$z_2$};
\node at (1,-1) (C) {$z_1$};
\node at (0,-3) (Y) {$y$};
\node at (2,-0.5) (U) {$u$};
\path[dashed,->] (U) edge (A);
\path[dashed,->] (U) edge (B);
\path[->] (A) edge (C);
\path[->] (C) edge (B);
\path[->] (A) edge (Y);
\path[->] (B) edge (Y);
\end{tikzpicture}
&
\begin{tikzpicture}[inner sep=1mm]
\node at (0,0) (A) {$x_1$};
\node at (2,-2) (B) {$x_2$};
\node at (3,-1) (C) {$z$};
\node at (1,1) (U1) {$u_1$};
\node at (5,0) (U2) {$u_2$};
\node at (2,-3) (Y) {$y$};
\path[->] (A) edge (B);
\path[->] (A) edge (C);
\path[->] (C) edge (B);
\path[->] (A) edge (Y);
\path[->] (B) edge (Y);
\path[dashed,->] (U1) edge (A);
\path[dashed,->] (U1) edge (C);
\path[dashed,->] (A) edge (U2);
\path[dashed,->] (U2) edge (C);
\path[dashed,->] (U2) edge (Y);
\end{tikzpicture}\\
\hline
\end{tabular}\caption{Causal graphs in the examples. All the nodes are observed except $u$, $u_1$ and $u_2$.}\label{fig:example2}
\end{figure}
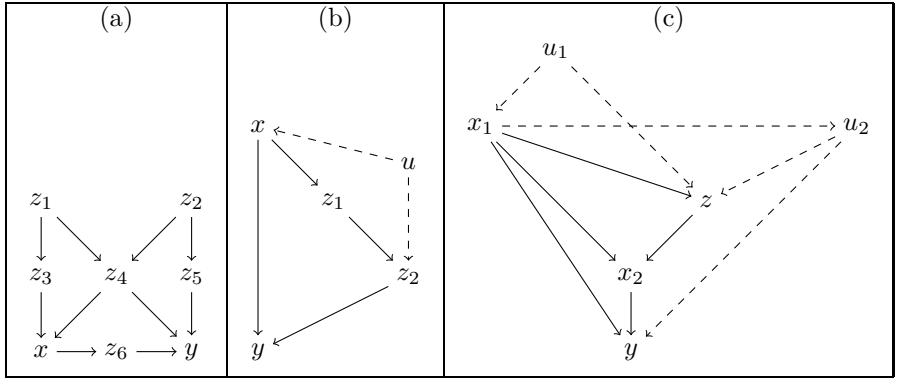

\subsubsection{Plan Evaluation}

This section covers an additional case where causal effect identification is possible. It likens \cite[Theorem 4.4.1]{Pearl2009}. See also \cite[Section 11.3.7]{Pearl2009}. Specifically, it addresses the evaluation of a plan, where a plan is a sequence of interventions. For instance, we may want to evaluate the effect on the patient's health of some treatments administered at different time points. More formally, let $X_1, \ldots, X_n$ denote the random variables on which we intervene. Let $Y$ denote the set of target random variables. Assume that we intervene on $X_k$ only after having intervened on $X_1, \ldots, X_{k-1}$ for all $1 \leq k \leq n$, and that $Y$ is observed only after having intervened on $X_1, \ldots, X_n$. The goal is to identify $\tp(Y | I_{X_1} \ldots I_{X_n} X_1 \ldots X_n)$. Let $N_1, \ldots, N_n$ denote some observed random variables besides $X_1, \ldots, X_n$ and $Y$. Assume that $N_k$ is observed before intervening on $X_k$ for all $1 \leq k \leq n$. Then, it seems natural to assume for all $1 \leq k \leq n$ and all $Z_k \subseteq N_k$ that $Z_k$ does not get affected by future interventions, i.e.
\begin{equation}\label{eq:natural}
Z_k \tci_{\m{M}} X_k \ldots X_n | I_{X_k} \ldots I_{X_n} X_1 \ldots X_{k-1} Z_1 \ldots Z_{k-1}
\end{equation}
and
\begin{equation}\label{eq:natural2}
Z_k \tci_{\m{M}} F_{X_k} \ldots F_{X_n} | X_1 \ldots X_{k-1} Z_1 \ldots Z_{k-1}.
\end{equation}

\begin{theorem}\label{the:plan}
If there exist disjoint sets $Z_k \subseteq N_k$ for all $1 \leq k \leq n$ such that
\begin{equation}\label{eq:plan}
Y \tci_{\m{M}} F_{X_k} | I_{X_{k+1}} \ldots I_{X_n} X_1 \ldots X_n Z_1 \ldots Z_k
\end{equation}
then $\tp(Y | I_{X_1} \ldots I_{X_n} X_1 \ldots X_n) =$
\[
\sum_{Z_1 \ldots Z_n} \tp(Y | X_1 \ldots X_n Z_1 \ldots Z_n) \prod_{k=1}^n \tp(Z_k | X_1 \ldots X_{k-1} Z_1 \ldots Z_{k-1}).
\]
\end{theorem}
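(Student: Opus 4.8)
The plan is to prove the identity by a stage-by-stage reduction that peels off the interventions one at a time, in the order $X_1, X_2, \ldots, X_n$, introducing the adjustment variable $Z_k$ exactly when the intervention on $X_k$ is removed. The single-stage version of this argument is essentially the back-door criterion (Case~1 of Theorem \ref{the:sufficiency}): there, conditioning on $Z$ and then applying Rules~2 and~3 of the do-calculus converts $\tp(Y | I_X X)$ into $\sum_Z \tp(Y | X Z)\tp(Z)$. I would iterate this idea, with the remaining interventions $I_{X_{k+1}} \ldots I_{X_n}$ and the already-observed $X_1 \ldots X_{k-1} Z_1 \ldots Z_{k-1}$ playing the role of a fixed context.

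Concretely, I would maintain the invariant that after $k-1$ stages the quantity has the form
\[
\sum_{Z_1 \ldots Z_{k-1}} \tp(Y | I_{X_k} \ldots I_{X_n} X_1 \ldots X_n Z_1 \ldots Z_{k-1}) \prod_{j=1}^{k-1} \tp(Z_j | X_1 \ldots X_{j-1} Z_1 \ldots Z_{j-1}),
\]
so that the first $k-1$ interventions have already been turned into plain observations and the product already has its target form. At stage $k$ I would first apply the law of total probability to the $Y$-factor, introducing $Z_k$ and producing the factor $\tp(Z_k | I_{X_k} \ldots I_{X_n} X_1 \ldots X_n Z_1 \ldots Z_{k-1})$. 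Applying Rule~3 with the set $X_k \ldots X_n$ to this factor reduces it to $\tp(Z_k | X_1 \ldots X_{k-1} Z_1 \ldots Z_{k-1})$, because its two antecedents are exactly (\ref{eq:natural}) and (\ref{eq:natural2}) at index $k$. I would then apply Rule~2 to the $Y$-factor $\tp(Y | I_{X_k} \ldots I_{X_n} X_1 \ldots X_n Z_1 \ldots Z_k)$ to delete the intervention $I_{X_k}$; its antecedent is precisely (\ref{eq:plan}) at index $k$. This advances the invariant to stage $k+1$, and after stage $n$ the $Y$-factor is $\tp(Y | X_1 \ldots X_n Z_1 \ldots Z_n)$, which yields the claimed formula.

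The part that needs care---and the reason the interventions must be removed in increasing order---is the bookkeeping of the conditioning sets. The hypotheses (\ref{eq:natural}) and (\ref{eq:natural2}) condition only on the \emph{future} interventions $I_{X_k} \ldots I_{X_n}$, never on the past ones $I_{X_1} \ldots I_{X_{k-1}}$. Hence one cannot expand all of $Z_1, \ldots, Z_n$ up front: that would leave every past intervention sitting in the conditioning set of the $Z_k$-factor, and neither rule would apply there. Interleaving the expansion with the Rule~2 steps is exactly what keeps the conditioning sets aligned, since each Rule~2 step replaces a past intervention $I_{X_j}$ by the bare observation $X_j$ before $Z_k$ is ever introduced. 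Once this order is fixed, verifying at each stage that the antecedents of Rules~2 and~3 coincide verbatim with (\ref{eq:plan}), (\ref{eq:natural}) and (\ref{eq:natural2}) is routine, and the assumed strict positivity of $p(V | O_V)$ guarantees that all the conditional probabilities involved are well defined.
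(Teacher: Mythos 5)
Your proof is correct and takes essentially the same approach as the paper's: both peel off the interventions in the order $I_{X_1}, \ldots, I_{X_n}$, interleaving the total-probability expansion that introduces $Z_k$ with an application of Rule 3 (whose antecedents are Equations (\ref{eq:natural}) and (\ref{eq:natural2}) at index $k$) to the $Z_k$-factor and of Rule 2 (whose antecedent is Equation (\ref{eq:plan}) at index $k$) to the $Y$-factor. The only cosmetic differences are that you apply Rule 3 before Rule 2 within each stage and state the inductive invariant explicitly, whereas the paper applies Rule 2 first and writes out the first two stages before concluding by iteration.
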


\begin{proof}
Note that
\[
\tp(Y | I_{X_1} \ldots I_{X_n} X_1 \ldots X_n)
\]
\[
= \sum_{Z_1} \tp(Y | I_{X_1} \ldots I_{X_n} X_1 \ldots X_n Z_1) \tp(Z_1 | I_{X_1} \ldots I_{X_n} X_1 \ldots X_n)
\]
\[
= \sum_{Z_1} \tp(Y | I_{X_2} \ldots I_{X_n} X_1 \ldots X_n Z_1) \tp(Z_1 | I_{X_1} \ldots I_{X_n} X_1 \ldots X_n)
\]
\[
= \sum_{Z_1} \tp(Y | I_{X_2} \ldots I_{X_n} X_1 \ldots X_n Z_1) \tp(Z_1)
\]
where the second equality is due to rule 2 and Equation (\ref{eq:plan}), and the third due to rule 3 and Equations (\ref{eq:natural}) and (\ref{eq:natural2}). For the same reasons, we have that
\[
\tp(Y | I_{X_1} \ldots I_{X_n} X_1 \ldots X_n)
\]
\[
= \sum_{Z_1 Z_2} \tp(Y | I_{X_2} \ldots I_{X_n} X_1 \ldots X_n Z_1 Z_2) \tp(Z_1) \tp(Z_2 | I_{X_2} \ldots I_{X_n} X_1 \ldots X_n Z_1)
\]
\[
= \sum_{Z_1 Z_2} \tp(Y | I_{X_3} \ldots I_{X_n} X_1 \ldots X_n Z_1 Z_2) \tp(Z_1) \tp(Z_2 | X_1 Z_1).
\]
Continuing with this process for $Z_3, \ldots, Z_n$ yields the desired result.
\end{proof}

For instance, consider the causal graph (c) in Figure \ref{fig:example2} \cite[Figure 4.4]{Pearl2009}. We do not need to know the graph nor its existence to identify the effect on $y$ of the plan consisting of $I_{x_1} x_1$ followed by $I_{x_2} x_2$. It suffices to know that $N_1 = \emptyset$, $N_2 = z$, $y \tci_{\m{M}} F_{x_1} | I_{x_2} x_1 x_2$, and $y \tci_{\m{M}} F_{x_2} | x_1 x_2 z$.  Recall also that $z \tci_{\m{M}} x_2 | I_{x_2} x_1$ and $z \tci_{\m{M}} F_{x_2} | x_1$ are known by Equations (\ref{eq:natural}) and (\ref{eq:natural2}). Then, the desired effect can be identified thanks to the theorem above by setting $Z_1 = \emptyset$ and $Z_2 = z$.

In applying the theorem above, efficiency may be an issue again: Finding $Z_1, \ldots, Z_n$ seems to require an exhaustive search. An effective way to carry out this search is as follows: Select $Z_k$ only after having selected $Z_1, \ldots, Z_{k-1}$, and such that $Z_k$ is a minimal subset of $N_k$ that satisfies Equation (\ref{eq:plan}). If no such subset exists or all the subsets have been tried, then backtrack and set $Z_{k-1}$ to a different minimal subset of $N_{k-1}$. We now show that this procedure finds the desired subsets whenever they exist. Assume that there exist some sets $Z_1^*, \ldots, Z_n^*$ that satisfy Equation (\ref{eq:plan}). For $k=1$ to $n$, set $Z_k$ to a minimal subset of $Z_k^*$ that satisfies Equation (\ref{eq:plan}). If no such subset exists, then set $Z_k$ to a minimal subset of $( \bigcup_{i=1}^k Z_i^* ) \setminus \bigcup_{i=1}^{k-1} Z_i$ that satisfies Equation (\ref{eq:plan}). Such a subset exists because setting $Z_k$ to $( \bigcup_{i=1}^k Z_i^* ) \setminus \bigcup_{i=1}^{k-1} Z_i$ satisfies Equation (\ref{eq:plan}), since this makes $Z_1 \ldots Z_k = Z_1^* \ldots Z_k^*$. In either case, note that $Z_k \subseteq N_k$. Then, the procedure outlined will find the desired subsets.

We can extend the previous theorem to evaluate the effect of a plan on the target random variables $Y$ and on some observed non-control random variables $W \subseteq N_n$. For instance, we may want to evaluate the effect that the treatment has on the patient's health at intermediate time points, in addition to at the end of the treatment. This scenario is addressed by the following theorem, whose proof is similar to that of the previous theorem. The theorem likens \cite[Theorem 4]{PearlandRobins1995}.

\begin{theorem}\label{the:plan2}
If there exist disjoint sets $Z_k \subseteq N_k \setminus W$ for all $1 \leq k \leq n$ such that
\[
W Y \tci_{\m{M}} F_{X_k} | I_{X_{k+1}} \ldots I_{X_n} X_1 \ldots X_n Z_1 \ldots Z_k
\]
then $\tp(W Y | I_{X_1} \ldots I_{X_n} X_1 \ldots X_n) =$
\[
\sum_{Z_1 \ldots Z_n} \tp(W Y | X_1 \ldots X_n Z_1 \ldots Z_n) \prod_{k=1}^n \tp(Z_k | X_1 \ldots X_{k-1} Z_1 \ldots Z_{k-1}).
\]
\end{theorem}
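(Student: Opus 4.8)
The plan is to reproduce the telescoping argument in the proof of Theorem \ref{the:plan} essentially verbatim, with the joint target $WY$ taking over the role that $Y$ played there. The structural reason this works is that $W$ enters only as an additional target, never as a covariate: the restriction $Z_k \subseteq N_k \setminus W$ keeps each adjustment set disjoint from $W$, so that summing over $Z_1 \ldots Z_n$ never marginalizes any component of $WY$, and the naturalness assumptions of Equations \ref{eq:natural} and \ref{eq:natural2}, which are postulated for every subset of $N_k$, continue to apply unchanged because $Z_k \subseteq N_k \setminus W \subseteq N_k$.

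First I would peel off $Z_1$. By the law of total probability, $\tp(WY | I_{X_1} \ldots I_{X_n} X_1 \ldots X_n)$ equals $\sum_{Z_1} \tp(WY | I_{X_1} \ldots I_{X_n} X_1 \ldots X_n Z_1)\,\tp(Z_1 | I_{X_1} \ldots I_{X_n} X_1 \ldots X_n)$. Rule 2 applied to the first factor, together with the hypothesis of the theorem at $k=1$, namely $WY \tci_{\m{M}} F_{X_1} | I_{X_2} \ldots I_{X_n} X_1 \ldots X_n Z_1$, deletes the intervention $I_{X_1}$ from that factor. Rule 3 applied to the second factor, together with Equations \ref{eq:natural} and \ref{eq:natural2} instantiated at $k=1$, collapses it to $\tp(Z_1)$. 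This is exactly the first display in the proof of Theorem \ref{the:plan}, the only difference being that $W$ now accompanies $Y$ in the target.

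Then I would iterate. Having peeled off $Z_1 \ldots Z_{k-1}$ and deleted the interventions $I_{X_1}, \ldots, I_{X_{k-1}}$, I introduce $Z_k$ by total probability, invoke rule 2 with the hypothesis of the theorem at index $k$ to delete $I_{X_k}$ from the $WY$-factor, and invoke rule 3 with Equations \ref{eq:natural} and \ref{eq:natural2} at index $k$ to reduce $\tp(Z_k | I_{X_k} \ldots I_{X_n} X_1 \ldots X_n Z_1 \ldots Z_{k-1})$ to $\tp(Z_k | X_1 \ldots X_{k-1} Z_1 \ldots Z_{k-1})$. After the $n$-th step all interventions have been removed and the claimed product formula emerges.

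The part that requires the most care — rather than mechanical copying — is verifying at each index $k$ that the conditioning sets genuinely match the patterns required by rules 2 and 3: that the $WY$-factor carries $F_{X_k}$ in exactly the configuration of rule 2's antecedent, and that the $Z_k$-factor carries both $X_k \ldots X_n$ and $F_{X_k} \ldots F_{X_n}$ in the configuration of rule 3's two antecedents, which are supplied by Equations \ref{eq:natural} and \ref{eq:natural2} respectively. Once this pattern-matching is checked for the generic index $k$, the proof is a faithful transcription of the previous one, and no new independence beyond the theorem's hypothesis and the standing naturalness assumptions is needed.
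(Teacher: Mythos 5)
Your proposal is correct and takes exactly the approach the paper intends: the paper gives no separate proof for this theorem, stating only that its proof ``is similar to that of the previous theorem,'' and your transcription of the telescoping argument from Theorem \ref{the:plan} with $WY$ replacing $Y$ --- including the observations that $Z_k \subseteq N_k \setminus W \subseteq N_k$ keeps Equations (\ref{eq:natural}) and (\ref{eq:natural2}) applicable and that the sums never marginalize over any part of $WY$ --- is precisely that intended argument.
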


Finally, note that in the previous theorem $X_k$ may be a function of $X_1 \ldots X_{k-1}$ $W_1 \ldots W_{k-1} Z_1 \ldots Z_{k-1}$, where $W_k = ( W \setminus \bigcup_{i=1}^{k-1} W_i ) \cap N_k$ for all $1 \leq k \leq n$. For instance, the treatment prescribed at any point in time may depend on the treatments prescribed previously and on the patient's response to them. In such a case, the plan is called conditional, otherwise is called unconditional. We can evaluate alternative conditional plans by applying the theorem above for each of them. See also \cite[Section 11.4.1]{Pearl2009}.

\subsection{Context-specific Independences Revisited}\label{sec:csirevisited}

As mentioned in Section \ref{sec:csi}, we can extend the results in this paper to independence models containing context-specific independences of the form $I \ci J | K, L=l$ by just rephrasing the properties CI0-3 and ci0-3 to accommodate them. In the causal setup described above, for instance, we may want to represent triplets with interventions in their third element as long as they do not affect the first two elements of the triplets, i.e. $I \tci J | K M I_M I_N$ with $I$, $J$, $K$, $M$ and $N$ disjoint subsets of $V$, which should be read as follows: Given that $M N$ operates under its interventional regime and $V \setminus M N$ operates under its observational regime, $I$ is conditionally independent of $J$ given $K M$. Note that an intervention is made on $N$ but the resulting value is not considered in the triplet, e.g. we know that a treatment has been prescribed but we ignore which. The properties CI0-3 can be extended to these triplets by simply adding $M I_M I_N$ to the third member of the triplets. That is, let $C = M I_M I_N$. Then:

\begin{itemize}

\item[(CI0)] $I \tci J | K C \Leftrightarrow J \tci I | K C$.

\item[(CI1)] $I \tci J | K L C, I \tci K | L C \Leftrightarrow I \tci J K | L C$.

\item[(CI2)] $I \tci J | K L C, I \tci K | J L C \Rightarrow I \tci J | L C, I \tci K | L C$.

\item[(CI3)] $I \tci J | K L C, I \tci K | J L C \Leftarrow I \tci J | L C, I \tci K | L C$.

\end{itemize}
Similarly for ci0-3.

Another case that we may want to consider is when a triplet includes interventions in its third element that affect its second element, i.e. $I \tci J | K M I_J I_M I_N$ with $I$, $J$, $K$, $M$ and $N$ disjoint subsets of $V$, which should be read as follows: Given that $J M N$ operates under its interventional regime and $V \setminus J M N$ operates under its observational regime, the causal effect on $I$ is independent of $J$ given $K M$. These triplets liken the probabilistic causal irrelevances in \cite[Definition 7]{GallesandPearl1997}. The properties CI1-3 can be extended to these triplets by simply adding $M I_J I_K I_M I_N$ to the third member of the triplets. Note that CI0 does not make sense now, i.e. $I$ is observed whereas $J$ is intervened on. Let $C = M I_J I_K I_M I_N$. Then:

\begin{itemize}

\item[(CI1)] $I \tci J | K L C, I \tci K | L C \Leftrightarrow I \tci J K | L C$.

\item[(CI2)] $I \tci J | K L C, I \tci K | J L C \Rightarrow I \tci J | L C, I \tci K | L C$.

\item[(CI3)] $I \tci J | K L C, I \tci K | J L C \Leftarrow I \tci J | L C, I \tci K | L C$.

\item[(CI1')] $I \tci J | I' L C, I' \tci J | L C \Leftrightarrow I I' \tci J | L C$.

\item[(CI2')] $I \tci J | I' L C, I' \tci J | I L C \Rightarrow I \tci J | L C, I' \tci J | L C$.

\item[(CI3')] $I \tci J | I' L C, I' \tci J | I L C \Leftarrow I \tci J | L C, I' \tci J | L C$.

\end{itemize}
Similarly for ci1-3.

\section{Discussion}\label{sec:discussion}

In this work, we have proposed to represent semigraphoids, graphoids and compositional graphoids by their elementary triplets. We have also shown how this representation helps performing some operations with independence models, including causal reasoning. For this purpose, we have rephrased in terms of conditional independences some of Pearl's results for causal effect identification. We find interesting to explore non-graphical approaches to causal reasoning in the vein of \cite{Dawid2015}, because of the risks of relying on causal graphs for causal reasoning, e.g. a causal graph of the domain at hand may not exist and/or the effects of an intervention may not be local. See \cite{Dawid2010a,Dawid2010b} for a detailed account of these risks. Pearl also acknowledges the need to develop non-graphical approaches to causal reasoning \cite[p. 10]{GallesandPearl1997}. As future work, we consider seeking for necessary conditions for non-graphical causal effect identification (recall that the ones described in this paper are just sufficient). We also consider implementing and experimentally evaluating the efficiency of some of the operations discussed in this work, including a comparison with their counterparts in the dominant triplet representation as reported in \cite{BaiolettiBV09,Baioletti20112,BaiolettiPV13}.

\section*{Acknowledgments}

We would like to thank the anonymous Reviewers for their comments, which helped us to improve the original manuscript substantially.

\bibliographystyle{abbrv}
\bibliography{Pena_paper7revision2}

\end{document}